\pdfoutput=1
\documentclass[10pt,logo,copyright]{nvidiatechreport}
\usepackage{algorithm}
\usepackage[most]{tcolorbox}

\usepackage{parskip}        %
\usepackage{amsfonts}       %
\usepackage{nicefrac}       %
\usepackage{animate}        %
\usepackage{subcaption}
\usepackage{tabularx}
\usepackage{makecell}
\usepackage{adjustbox}
\usepackage{setspace}
\usepackage{siunitx}        %
\usepackage{enumitem}       %
\newcolumntype{M}[1]{>{\centering\arraybackslash}m{#1}}
\usepackage{float}

\usepackage{tikz}
\usetikzlibrary{positioning,shapes,arrows}
\usepackage{amsmath,amsfonts,bm, bbm,leftindex}
\usepackage{multirow}
\usepackage{comment}
\usepackage{gensymb}
\usepackage{lipsum}
\usetikzlibrary{arrows.meta, positioning, fit}
\usepackage[para]{threeparttable}
\usepackage{tikz}
\usetikzlibrary{tikzmark}
\usetikzlibrary{decorations.pathreplacing}

\let\cite\citep
\usepackage[numbers]{natbib}

\definecolor{darkred}{rgb}{0.7, 0.0, 0.0}

\usepackage{pifont}
\usepackage{wrapfig}

\renewcommand*{\backref}[1]{}
\renewcommand*{\backrefalt}[4]{}

\usepackage[nameinlink]{cleveref}
\crefname{equation}{Eq.}{Eqs.}
\crefname{figure}{Fig.}{Figs.}
\crefname{section}{Sec.}{Sec.}
\crefname{appendix}{App.}{App.}
\crefname{table}{Tab.}{Tabs.}
\crefname{algorithm}{Algo}{Algo}
\crefname{thm}{Thm}{Thm}
\Crefname{thm}{Thm}{Thm}
\crefname{prop}{Prop}{Prop}
\usepackage{longtable}
\usepackage{mathtools}
\usepackage{siunitx}

\usepackage{ragged2e}
\definecolor{nvidiagreen}{HTML}{76B900}
\definecolor{bestrow}{HTML}{E1EBD7}
\newcolumntype{Y}{>{\raggedleft\arraybackslash}X}
\newcolumntype{L}[1]{>{\raggedright\arraybackslash}p{#1}}

\usepackage{tikz}
\usepackage{siunitx}
\usepackage{algorithm}
\usepackage{algpseudocode}
\usepackage{comment}
\definecolor{nvidiaGreen}{RGB}{118,185,0}
\usepackage{appendix}
\usepackage{titletoc}
\usepackage{wrapfig}
\Crefname{figure}{Fig.}{Figs.}
\Crefname{table}{Tab.}{Tab.}

\usetikzlibrary{positioning, calc, shapes, arrows.meta, backgrounds, spy}

\usepackage{amsthm}
\newtheorem{proposition}{Proposition}
\newtheorem{remark}{Remark}

\theoremstyle{definition}
\newtheorem{definition}{Definition}

\newcommand{\worldtrace}{\textsc{WorldTrace}\xspace}
\newcommand{\loopbench}{\textsc{LoopBench}\xspace}
\newcommand{\wtfield}{\textsc{WorldTrace-Field}\xspace}
\newcommand{\wtlandmark}{\textsc{WorldTrace-Landmark}\xspace}

\makeatletter
\renewcommand{\section}{%
  \@startsection{section}{1}{\z@}%
                {-1.0ex \@plus -0.4ex \@minus -0.2ex}%
                { 0.6ex \@plus  0.2ex \@minus  0.1ex}%
                {\large\bf\raggedright}%
}
\renewcommand{\subsection}{%
  \@startsection{subsection}{2}{\z@}%
                {-0.8ex \@plus -0.4ex \@minus -0.2ex}%
                { 0.3ex \@plus  0.2ex}%
                {\normalsize\bf\raggedright}%
}
\renewcommand{\subsubsection}{%
  \@startsection{subsubsection}{3}{\z@}%
                {-0.6ex \@plus -0.4ex \@minus -0.2ex}%
                { 0.15ex \@plus 0.2ex}%
                {\normalsize\bf\raggedright}%
}
\renewcommand{\paragraph}{%
  \@startsection{paragraph}{4}{\z@}%
                {0.5ex \@plus 0.4ex \@minus 0.2ex}%
                {-1em}%
                {\normalsize\bf}%
}

\@ifundefined{@toptitlebar}{}{%
\renewcommand{\@toptitlebar}{%
  \hrule height 4\p@%
  \vskip 0.20in%
  \vskip -\parskip
}
\renewcommand{\@bottomtitlebar}{%
  \vskip 0.22in%
  \vskip -\parskip
  \hrule height 1\p@%
  \vskip 0.04in
}
\renewcommand{\@maketitle}{%
  \vbox{%
    \hsize\textwidth
    \linewidth\hsize
    \vskip 0.03in%
    \@toptitlebar
    \centering
    {\LARGE\bf \@title\par}
    \@bottomtitlebar
    \if@anonymous
      \begin{tabular}[t]{c}\bf\rule{\z@}{24\p@}
        Anonymous Author(s) \\
        Affiliation \\
        Address \\
        \texttt{email} \\
      \end{tabular}%
    \else
      \def\And{%
        \end{tabular}\hfil\linebreak[0]\hfil%
        \begin{tabular}[t]{c}\bf\rule{\z@}{24\p@}\ignorespaces%
      }%
      \def\AND{%
        \end{tabular}\hfil\linebreak[4]\hfil%
        \begin{tabular}[t]{c}\bf\rule{\z@}{24\p@}\ignorespaces%
      }%
      \begin{tabular}[t]{c}\bf\rule{\z@}{24\p@}\@author\end{tabular}%
    \fi
    \vskip 0.12in \@minus 0.1in%
  }%
}
\renewenvironment{abstract}%
{%
  \vskip 0.02in%
  \centerline{\large\bf Abstract}%
  \vspace{0.3ex}%
  \begin{quote}%
}{%
  \par%
  \end{quote}%
  \vskip 0.3ex%
}
}%
\makeatother


\newcommand{\Sec}[1]{Sec.~\ref{#1}}
\newcommand{\App}[1]{App.~\ref{#1}}
\newcommand{\Tab}[1]{Tab.~\ref{#1}}
\newcommand{\Fig}[1]{Fig.~\ref{#1}}
\newcommand{\Alg}[1]{Alg.~\ref{#1}}
\newcommand{\Def}[1]{Def.~\ref{#1}}
\newcommand{\Rem}[1]{Rem.~\ref{#1}}
\newcommand{\Prop}[1]{Prop.~\ref{#1}}
\newcommand{\Eq}[1]{Eq.~\eqref{#1}}

\newcommand{\SecRange}[2]{Secs.~\ref{#1}--\ref{#2}}

\definecolor{revblue}{HTML}{1D4ED8}
\newif\ifshowrev
\showrevfalse

\newcommand{\timeSym}{t}                            %
\newcommand{\keySym}{K}                             %
\newcommand{\querySym}{Q}                           %
\newcommand{\dimCount}{c}                           %

\newcommand{\axisTemp}{t}                           %
\newcommand{\axisH}{h}                              %
\newcommand{\axisW}{w}                              %

\newcommand{\virtualMark}{v}                        %
\newcommand{\labelTrain}{\mathrm{train}}            %

\newcommand{\freqSup}{^{\idxFreq}}                  %

\newcommand{\realpart}{\mathrm{Re}}                 %
\newcommand{\eul}{e}                                %
\newcommand{\imagi}{i}                              %

\newcommand{\numChunks}{N}                          %
\newcommand{\numSummarySlots}{\smash{N_s}}          %
\newcommand{\numRecentSlots}{\smash{N_r}}           %
\newcommand{\localAttnSize}{\smash{L_{\mathrm{attn}}}} %
\newcommand{\trainLen}{L_{\labelTrain}}             %
\newcommand{\framesPerBlock}{F}                     %
\newcommand{\numSourceFrames}{M}                    %
\newcommand{\slotIdx}{s}                            %
\newcommand{\recentCache}{\mathcal{R}}              %
\newcommand{\summaryCache}{\mathcal{S}}             %
\newcommand{\numLayers}{\smash{n_{\ell}}}           %

\newcommand{\idxFreq}{f}                            %
\newcommand{\idxSrc}{m}                             %
\newcommand{\idxChunk}{n}                           %

\newcommand{\qpos}{q}                               %
\newcommand{\kpos}{k}                               %
\newcommand{\tminv}{\smash{\timeSym_{\min}^{\virtualMark}}}    %
\newcommand{\tmaxv}{\smash{\timeSym_{\max}^{\virtualMark}}}    %
\newcommand{\tv}{\timeSym^{\virtualMark}}                       %
\newcommand{\tvslot}[1][\slotIdx]{\smash{\timeSym_{#1}^{\virtualMark}}}  %
\newcommand{\landmarkTime}{\smash{\timeSym_{\ell^{*}}}}       %

\newcommand{\ropeBase}{\theta}                                      %
\newcommand{\ropefreq}{\smash{\ropeBase_{\idxFreq}}}                %
\newcommand{\numTempPairs}{\smash{\dimCount_{\axisTemp}}}           %
\newcommand{\numSpatialPairsH}{\smash{\dimCount_{\axisH}}}          %
\newcommand{\numSpatialPairsW}{\smash{\dimCount_{\axisW}}}          %
\newcommand{\Rot}[1]{R\!\left(#1\right)}                            %
\newcommand{\Krot}[1][\idxSrc]{\smash{\keySym_{\timeSym_{#1}}\freqSup}}                  %
\newcommand{\Kcx}[1][\idxSrc]{\smash{\keySym_{#1}\freqSup}}                     %
\newcommand{\Kcxmean}{\smash{\bar{\keySym}\freqSup}}                          %
\newcommand{\Knaive}{\smash{\bar{\keySym}_{\mathrm{naive}}\freqSup}}                      %
\newcommand{\Kfield}[1][\tv]{{\keySym}_{\mathrm{field}}\freqSup\!\left(#1\right)}         %
\newcommand{\Klandslot}[1][\tv]{{\keySym}_{\mathrm{land}}\freqSup\!\left(#1\right)}      %
\newcommand{\Klandsrc}{\smash{\keySym_{\landmarkTime}\freqSup}}                          %
\newcommand{\Qrope}{\smash{\querySym_{\qpos}\freqSup}}                                   %
\newcommand{\relOffset}{\smash{\delta_{\qpos,\kpos}}}                                    %
\newcommand{\trainOffset}{\smash{\Delta\timeSym_{\labelTrain}}}                          %
\newcommand{\attnLogit}{\smash{a_{\qpos,\kpos}\freqSup}}                              %
\newcommand{\contentTerm}{\smash{A_{\qpos,\kpos}\freqSup}}                               %

\newcommand{\sbThreshold}{\tau}                     %

\newcommand{\loopK}{K}                              %
\newcommand{\loopR}{R}                              %

\newcommand{\TempSSIM}{\mathrm{TempSSIM}}   %

\newcommand{\PAC}{\mathrm{PAC}}   %

\newcommand{\Reals}{\mathbb{R}}                     %
\newcommand{\bigO}[1]{O\!\left(#1\right)}           %
\newcommand{\simplex}[1]{\Delta_{#1}}               %
\newcommand{\identMat}[1]{I_{#1}}                   %
\newcommand{\basisVec}[1]{\mathbf{e}_{#1}}          %
\newcommand{\valueSym}{V}                           %
\newcommand{\queryVec}{q}                           %
\newcommand{\softmaxOp}{\operatorname{softmax}}     %
\newcommand{\attnOp}{\mathcal{A}}                   %
\newcommand{\projAttnOp}{\widehat{\mathcal{A}}}     %
\newcommand{\fullAttnWeights}{\smash{\alpha_{\queryVec}}}      %
\newcommand{\protoAttnWeights}{\smash{\hat\alpha_{\queryVec}}} %
\newcommand{\normOldWeights}{\smash{\bar\alpha_{\queryVec}^{\mathrm{old}}}} %
\newcommand{\projMat}{P}                            %
\newcommand{\numPastFrames}{T}                      %
\newcommand{\querySet}{\mathcal{Q}}                 %
\newcommand{\stackedFull}{\mathbf{A}}              %
\newcommand{\stackedProto}{\mathbf{B}}             %
\newcommand{\resid}[1][\projMat]{\smash{r_{\queryVec}(#1)}}    %
\newcommand{\mismatchObj}[1][\projMat]{\mathcal{J}(#1)}        %

\title{Addressable Memory for Video World Models}

\begin{document}

\author{%
  Xindi Wu$^{1\,2}$
  Sven Elflein$^{1\,3\,4}$
  James Lucas$^{1}$
  Olga Russakovsky$^{2}$
  Laura Leal-Taix\'{e}$^{1}$
  Despoina Paschalidou$^{1}$
  Jonathan Lorraine$^{1}$
  Aljosa Osep$^{1}$\\
  \small $^{1}$NVIDIA \quad
         $^{2}$Princeton University \quad
         $^{3}$University of Toronto \quad
         $^{4}$Vector Institute\\
  \small \href{https://research.nvidia.com/labs/sil/projects/WorldTrace/}{\textcolor{nvidiaGreen}{https://research.nvidia.com/labs/sil/projects/WorldTrace/}}%
}

\maketitle
\vspace{-0.01\textheight}
\hypertarget{abstract}{}

\begin{abstract}

We study visual persistence in interactive video world models. 
These models rely on a Key-Value (KV) cache as a growing visual memory to carry forward previously generated frames. 
However, we find that models can no longer reliably address stored content once rollouts extend beyond the training horizon, because temporal Rotary Positional Embeddings (RoPE) offsets then fall outside the range seen during training and the model struggles to retrieve the relevant visual information through attention.
Moreover, naively compressing the cache in the RoPE-rotated space corrupt memory by averaging together incompatible positional phases. 
To address this, we propose \worldtrace, a training-free memory framework for long-horizon visual persistence. 
\worldtrace keeps compressed memory addressable by assigning each summary slot a distinct, in-distribution virtual position.
Within this addressable cache, we study two memory compression approaches: \wtfield compresses history for temporal coherence, while \wtlandmark stores verbatim scene traces at detected transitions for episodic recall.
We further introduce \loopbench, a benchmark evaluating whether a compressed cache can reconstruct a previously visited scene after a long detour.
\wtfield improves temporal consistency by +15.5\%, and \wtlandmark improves episodic recall by +19.5\% on \loopbench, extending visually persistent generation without retraining.
\end{abstract}

\abscontent
\section{Introduction}
Memory is what makes a generated world persistent.
A world model should not only predict what comes next, but also preserve what has already been seen.
World models aim to simulate worlds we can explore~\citep{ha2018world,hafner2025dreamerv3,genie3_2025}, and recent autoregressive video world models~\citep{lingbot-world,matrixgame2025,oasis2024} pursue this vision by generating video scenes chunk-by-chunk, with each chunk attending over a Key-Value (KV) cache of prior context.
These models promise interactive applications such as next-generation game engines~\citep{valevski2025gamengen,oasis2024,matrixgame2025} and closed-loop robot simulators~\citep{agarwal2026cosmos,basant2026nvidia}, in which users can move freely through a visual world and revisit prior locations.
Such revisits demand \emph{visual persistence}, meaning that when an agent returns, the generated results should be consistent with the scene's original appearance, not a plausible-looking alternative.

In practice, visual persistence degrades rapidly once generation exceeds the training context length. A natural way to address this issue is to compress the linearly growing KV cache into a fixed-size memory~\citep{zhang2023h2o,li2024snapkv, cai2025pyramidkv,kim2026memrope} equal to the size of the context seen during training.
However, we find that the bottleneck is not simply whether past content is stored in the cache, but whether it remains addressable and how it is compressed.
As the generation horizon extends beyond the training context length, the positional encoding these models rely on (e.g., RoPE~\citep{su2024rope}) is queried at position indices outside its training-bounded range, so the model needs to attend over relative distances that are out-of-distribution (OOD), degrading attention-based memory retrieval. In other words, even if past memories are stored in the KV cache, the model cannot reliably retrieve them. A summary placed at such an offset can therefore become hard to reach.
This reveals that long-horizon failure is fundamentally a problem of \textit{addressability}, since no memory scheme can improve visual persistence if past observations cannot be reliably accessed once they fall outside the context window.

\paragraph{\worldtrace.}
To tackle this problem, we propose \worldtrace, a training-free approach for addressable compressed memory over long horizons. 
We assign each memory entry a fixed, in-distribution temporal position relative to the current frame, keeping past observations within the temporal positional encoding range the model was trained on so that distant tokens remain retrievable. The cache is organized into a recent window and summary slots for the distant horizon. 
Within this addressable cache, the remaining challenge is what the world model should choose to remember. We view this as a structured projection of a growing visual history into a fixed set of retrievable traces. 
We propose two complementary memory writers: \wtfield which preserves the coarse field of past information in rotation-invariant space for temporal coherence, and \wtlandmark which preserves sparse scene-entry traces for long-horizon recall.

Our experiments show that addressable memory extends the effective memory horizon. To evaluate episodic recall under controlled scene revisits, we propose \loopbench, a \emph{memory} benchmark that tests whether the compressed KV cache can retrieve a previously visited scene after a long detour. While the sliding-window baseline begins to forget scene structure after only a few seconds and loses structural fidelity when revisiting the same location, our \worldtrace maintains consistent scene reconstruction over long rollouts and recovers the original scene appearance (\Fig{fig:teaser}).

\paragraph{Our contributions are:}
\begin{enumerate}[leftmargin=1.5em,itemsep=2pt,topsep=2pt,parsep=0pt]
\item We identify two coupled challenges for long-horizon generation in autoregressive video world models, \emph{where} memory is placed and \emph{what} it stores. Out-of-distribution temporal offsets make cached tokens hard to address, and RoPE phase cancellation makes naively compressed summaries uninformative (\Sec{sec:rethinking}).
\item We propose \worldtrace, a training-free approach that keeps compressed memory both addressable and useful without retraining. It assigns each summary an in-distribution virtual position (\Sec{subsec:worldtrace_positions}) and organizes the growing visual history into a fixed set of retrievable traces, realized by two complementary operators: \wtfield for temporal coherence (\Sec{subsec:wtfield}) and \wtlandmark for episodic recall (\Sec{subsec:wtlandmark}).
\item We introduce \loopbench, a \emph{memory} benchmark of controlled scene revisits that measures whether a compressed cache can recover a previously visited scene (\Sec{sec:experiments}). We show that \worldtrace extends the effective memory horizon over the default sliding-window cache, with \wtfield improving temporal consistency by +15.5\% on long rollouts and \wtlandmark improving episodic recall by +19.5\% on \loopbench ABA revisits, both without retraining.
\end{enumerate}

\begin{figure}[t]
  \centering
  \includegraphics[width=\linewidth]{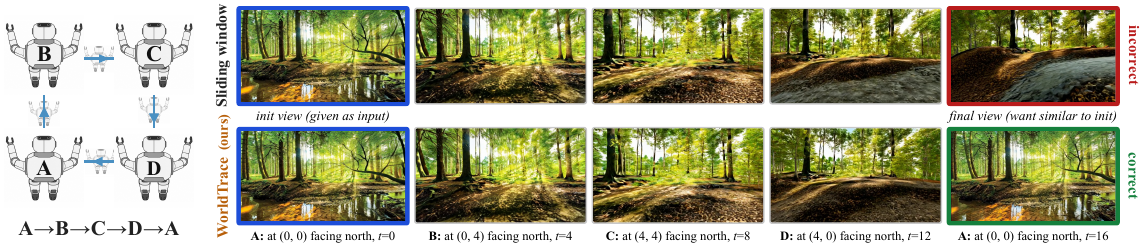}
  \caption{\textbf{Addressable memory maintains long-horizon visual persistence.} \emph{(Left)} Loop topology A$\to$B$\to$C$\to$D$\to$A. \emph{(Right)} Representative frames on Matrix-Game-2 along the path, covering the initial scene~A (blue border), waypoints B, C, and D, and the return to~A. \emph{Sliding window} (top) mismatches the reference appearance at the return (red border). Our \worldtrace with frozen landmark keys (green border) matches scene~A, confirming addressable long-horizon recall.}
  \label{fig:teaser}
\end{figure}

\section{Rethinking Memory in Video World Models}
\label{sec:rethinking}
Autoregressive video world models retain prior context as cached keys and values during generation. Long-horizon memory therefore depends both on whether relevant past context remains inside the cache and on whether the model can effectively attend to it during future generation.
Two coupled bottlenecks arise. First, cached tokens need to remain \emph{addressable} (\Sec{sec:rope_ood}), because attending to a distant token requires a temporal RoPE offset that may lie outside the training distribution, which leaves the token unreadable even when stored. Second, compressed summaries need to remain \emph{informative} (\Sec{sec:phase_cancel_failure}), because naively compressing old keys lets mismatched RoPE phases cancel out the signal the summary is supposed to carry.

\subsection{Position Determines Whether Memory Is Addressable}
\label{sec:rope_ood}
A cached token helps only if the current query can still attend to it.
Because a token's temporal RoPE offset from the current query grows with how far back it lies, entries eventually carry offsets beyond the range seen during training, at which point the model can no longer read them even though they still sit in the cache.
We summarize the notations in \App{app:notation}.

Concretely, during training the query only attends over a bounded range of relative offsets $\relOffset$ between the current frame and each cached frame:
\[
0 \leq \relOffset = \qpos - \kpos \leq \trainOffset,
\]
where $\qpos$ is the absolute position of the current query frame, $\kpos$ that of a cached key frame, with $\kpos \leq \qpos$ under causal autoregressive attention, and $\trainOffset$ the largest offset seen during training.

For temporal RoPE pair $\idxFreq$ at angular frequency $\ropefreq$, its contribution $\attnLogit$ to the query-key attention score is:
\[
\attnLogit = \realpart\!\left(\contentTerm \eul^{\imagi\ropefreq\relOffset}\right),
\]
where $\realpart(\cdot)$ is the real part and $\contentTerm$ is the content-dependent inner product in canonical coordinates between the query at $\qpos$ and the key at $\kpos$.
During long-horizon inference, these offsets grow ever larger, and $\relOffset$ can exceed $\trainOffset$. The affected RoPE components then take on rotation angles the model never saw during training.
This effect is frequency-dependent, with the fastest temporal components degrading into positional noise while slower ones stay close to their trained range and keep carrying usable signal~\citep{peng2023yarn,barbero2025round}.
We quantify this per frequency in \App{app:rope_background}.

Several concurrent methods investigate related positional issues.
Infinity-RoPE~\citep{infinityrope2026} proposes Block-relative RoPE which reindexes each cached frame relative to the current AR chunk so that $\relOffset$ always stays within the trained range, keeping generation in-distribution without a fixed memory budget. It targets infinite video generation rather than memory compression. 
Applied to a \emph{compressed} cache, however, this reindexing caps every offset at $\trainOffset$, so distinct summary slots collapse onto the same position and become indistinguishable (\Rem{rem:blockrel_collapse}).
MemRoPE~\citep{kim2026memrope} compresses history into two EMA memory tokens, a long-term and a short-term stream, which avoids this collapse simply because only two summaries need distinct positions. The design is fixed to two streams and does not extend to an $N$-slot cache, where capped Block-relative positions would again collide.
Unlike these, our \worldtrace targets an $N$-slot compressed cache directly by assigning every slot a distinct, in-distribution slot-rank position, so an arbitrary number of summaries stay individually addressable at any horizon.
We provide a detailed discussion of related work in \App{app:related}.

\subsection{Content Compression Determines Whether Memory Is Informative}
\label{sec:phase_cancel_failure}

Even a summary placed at an in-distribution temporal position is only worth reading if compression preserved what it encodes.
A common way to compress cached keys is averaging~\citep{bolya2023tome,rae2020compressive}.
With RoPE-rotated keys, at frequency $\idxFreq$ this corresponds to:
\[
\Knaive = \smash{\frac{1}{\numSourceFrames}}\sum\nolimits_{\idxSrc=1}^{\numSourceFrames} \Rot{\ropefreq \timeSym_{\idxSrc}}\,\Kcx,
\]
where $\Kcx$ is the canonical (unrotated) key of frame $\idxSrc$ at frequency $\idxFreq$ and $\Rot{\ropefreq \timeSym_{\idxSrc}}$ is its RoPE rotation.
Each frame is rotated by a different angle $\ropefreq \timeSym_{\idxSrc}$ before averaging, so $\Knaive$ sums vectors that point in different directions.
If the frames come from nearby timestamps, their rotation angles are close, so averaging largely preserves the signal.
If they come from distant timestamps, their angles can point in opposite directions and partially cancel, weakening that frequency's contribution to the summary regardless of what it encoded.
Naive averaging in RoPE-rotated space thus corrupts the compressed memory.

\paragraph{The two failures are coupled.}
Most prior methods fix only one of the two, with one line of work adjusting \emph{where} memories are placed, reparameterizing or capping offsets (\eg Block-relative~\citep{infinityrope2026}), while another decides \emph{what} each summary stores, whether by smoothing past keys with dual-rate EMA~\citep{kim2026memrope}, merging them by temporal averaging~\citep{bolya2023tome,rae2020compressive}, or evicting entries~\citep{zhang2023h2o,li2024snapkv,cai2025pyramidkv}.
Addressable positions cannot help if compression cancels the signal a summary should carry, and an informative summary cannot help if it sits at an out-of-distribution position.
Thus we need to keep compressed memory both addressable and informative once generation exceeds the training window.
Visual persistence in a video world model demands exactly this, because an agent that explores and later revisits a location needs the cache to keep those distant scenes readable across long detours (addressability) and to retain enough of their content to regenerate them faithfully (informativeness). Our \worldtrace is built to satisfy both, which is what lets a generated world stay consistent over long, interactive rollouts.

\section{\worldtrace}
\label{sec:worldtrace}

We propose \worldtrace, a training-free approach for addressable memory in autoregressive video world models.
\worldtrace maintains a bounded trace of distant history in fixed summary slots via canonical-space key assignment, while keeping each slot \emph{addressable} (\Sec{subsec:worldtrace_positions}). We formulate this cached memory compression for visual persistence as structured sparse attention (\Sec{subsec:sparse_overview}, \App{app:sparse_attention}). Its two variants, \wtfield (\Sec{subsec:wtfield}) and \wtlandmark (\Sec{subsec:wtlandmark}), target temporal coherence and episodic recall, respectively.

\subsection{Virtual Position Assignment: \worldtrace Slot Indexing}
\label{subsec:worldtrace_positions}

\paragraph{Cache structure.}
\label{subsec:arch}

\begin{figure}[t]
\centering
\includegraphics[width=\linewidth]{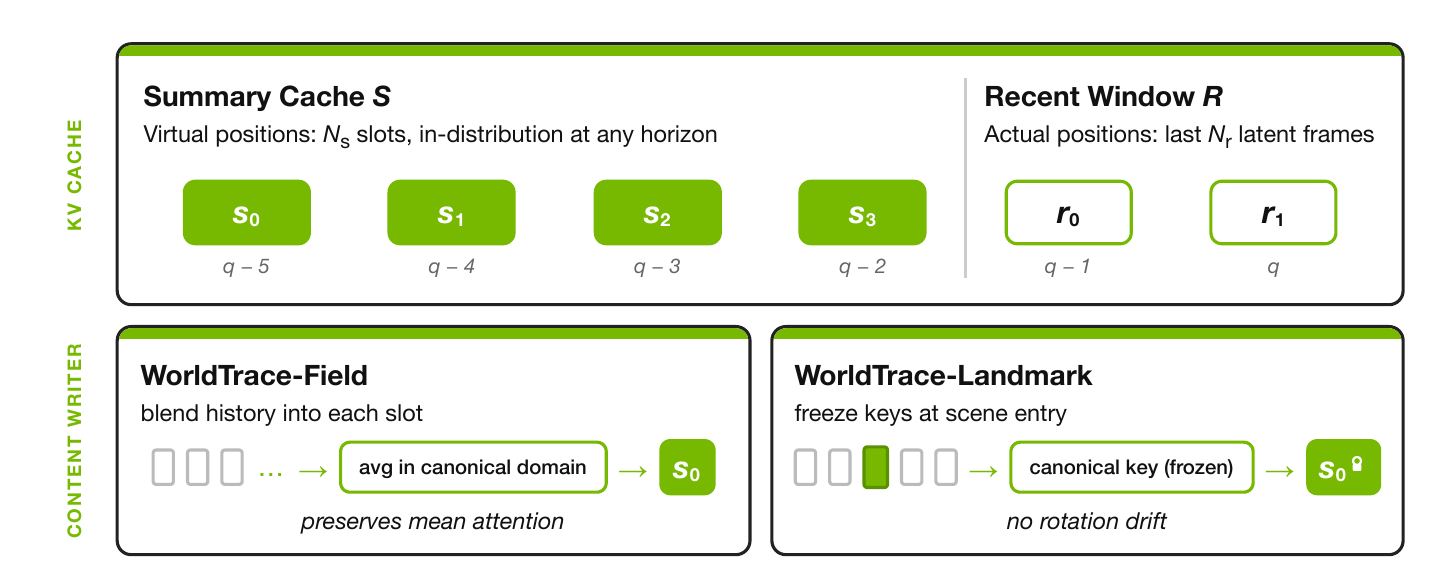}
\caption{\textbf{\worldtrace overview.} \worldtrace partitions the context window into a compressed \emph{summary} cache $\summaryCache$ for the distant past and a verbatim \emph{recent} window $\recentCache$ (top). Each slot occupies one latent-frame position in the cache, with recent slots holding one verbatim frame and summary slots holding compressed history. \wtfield (bottom left): Keys are unrotated and averaged in canonical space, then re-rotated at each slot's virtual position. \wtlandmark (bottom right): Scene-entry landmarks store frozen canonical keys and are re-rotated to the current virtual position.
}
\label{fig:worldtrace_pipeline}
\end{figure}

As shown in \cref{fig:worldtrace_pipeline}, we partition the local attention window $\localAttnSize$ into a \textbf{recent window} $\recentCache$ of $\numRecentSlots$ verbatim latent frames and a \textbf{summary cache} $\summaryCache$ of $\numSummarySlots$ slots storing compressed history, so that $\numSummarySlots + \numRecentSlots = \localAttnSize$.

Given this cache structure, the remaining question is how to position each summary slot relative to the query. A memory position scheme assigns a virtual position to summary slot $\slotIdx$ for a query at timestamp $\qpos$, and to be usable by the attention mechanism, it should satisfy four properties:
(i) linearity in $\qpos$ and $\slotIdx$ for a simple predictable mapping, (ii) in-distribution positions so the model can attend, (iii) horizon-stability so virtual positions stay in-distribution at any generation length, and (iv) distinct positions for each slot so attention can tell them apart. %
Summary slot positions must stay within the range seen during training:
\begin{equation}
\tminv = \max\!\bigl(0,\; \qpos - \trainOffset\bigr), \quad \tmaxv = \qpos - \numRecentSlots.
\label{eq:tv_bounds}
\end{equation}
Within those bounds, each summary slot takes the position fixed by its rank.
\begin{definition}[\worldtrace Slot Indexing]
\label{def:vpos}
Given current query position $\qpos$, local attention window $\localAttnSize$ ($=\numSummarySlots + \numRecentSlots$), and summary cache size $\numSummarySlots$, the virtual position of summary slot $\slotIdx$ is:
\begin{equation}
\tvslot \;=\; \qpos - (\localAttnSize{-}1{-}\slotIdx), \quad \slotIdx = 0,\ldots,\numSummarySlots{-}1.
\label{eq:vpos}
\end{equation}
\end{definition}
\noindent The assignment covers the in-distribution range with fixed slot-rank offsets relative to the current frame. By construction, $\tvslot \in [\tminv, \tmaxv]$ at any generation length, offsets depend only on slot rank and $\qpos$, and importantly, \textit{not} the absolute horizon $\numChunks$.

\begin{remark}[Block-relative collapse]
\label{rem:blockrel_collapse}
Block-relative positions~\citep{infinityrope2026} cap all KV-cache offsets at $\trainOffset$ (\Sec{sec:rope_ood}). As generation grows, summary slots spanning history beyond $\trainOffset$ steps back all share the same minimum virtual position $\tminv$, making the compressed slots positionally indistinguishable. Our \worldtrace slot indexing (\Def{def:vpos}) avoids this by anchoring each virtual position to slot rank through $\tvslot{=}\qpos{-}(\localAttnSize{-}1{-}\slotIdx)$, whose offset $(\localAttnSize{-}1{-}\slotIdx)$ is fixed by slot rank alone and therefore keeps all $\numSummarySlots$ slots distinct at any generation horizon $\numChunks$.
\end{remark}

\paragraph{Why canonical keys.}
Each slot's virtual position $\tvslot$ is set by slot rank (\Def{def:vpos}), independently of the original timestamps $\{\timeSym_{\idxSrc}\}$ of its source frames, so placing compressed content at $\tvslot$ requires first unrotating each key to remove its original RoPE rotation and then re-rotating to the target phase at $\tvslot$. Compressing in the rotated domain fails for two reasons: keys carrying distinct rotations $\Rot{\ropefreq \timeSym_{\idxSrc}}$ point in different directions at each frequency and partially cancel when combined, and the compressed key would still carry the phase of its source timestamps rather than that of $\tvslot$. Storing keys canonically resolves both. Concurrent memory-token work adopts the same mechanism~\citep{kim2026memrope} and what we add is its pairing with slot-rank positions (\Def{def:vpos}).
With addressable summary slots, the remaining design choice is the \emph{content writer}. Both \worldtrace variants use the same canonical (un-rotated) key domain.

\subsection{Memory Compression as Structured Sparse Attention}
\label{subsec:sparse_overview}

With slot positions fixed, the remaining choice is \emph{what} to store in each slot. A bounded cache can take neither extreme, since storing all past frames verbatim would let the KV cache grow linearly with generation length, while evicting them, as a sliding window does, is precisely what destroys visual persistence (\Sec{sec:rethinking}). What remains is to \emph{compress} distant history rather than discard it. We view this memory compression as \textit{structured sparse attention} because that view states the design target explicitly, asking the compressed cache to approximate full attention over all $\numPastFrames$ past frames with a fixed-$L$ structured sparse pattern in which each of the $L$ cache slots summarizes a segment of the full history. Framing the problem this way turns the choice of slot content into a quantifiable approximation problem. The projection matrix $P \in \mathbb{R}^{L \times \numPastFrames}$ defines this pattern by giving slot $i$ the compressed key $P_i\Kcx$, rotated to the slot's virtual position. The recent window uses verbatim identity rows for the last $\numRecentSlots$ frames, and the $\numSummarySlots$ summary rows carry the compressed information.

\paragraph{Approximating full attention.}
Attending over all $\numPastFrames$ past frames would produce a softmax-weighted combination of their values, with weights $\alpha_q$ determined by query-key similarity. In order to keep the cache bounded, memory compression replaces this with projected attention over $L$ compressed keys, where row $i$ of $P$ defines which past frames contribute to compressed key $i$, given by $P_i\Kcx$. Re-encoding each compressed key at $\tvslot$ keeps the approximation in-distribution at any horizon. We approximate the error with the \emph{distribution mismatch} between the full attention weights $\alpha_q$ and the weights induced by the compressed cache (\Prop{prop:approx_bound}, \App{app:sparse_attention}). The choice of $P$ therefore determines which parts of history are preserved under a fixed slot budget.

\paragraph{Choosing the projection matrix.}
This view reduces memory compression design to the choice of $P$, with a good $P$ minimizing the distribution mismatch above for the queries encountered during rollout. Relaxing the entries of $P$ from fixed patterns to arbitrary nonnegative values turns this minimization into a nonnegative matrix factorization (NMF) problem~\citep{lee1999learning}, approximately factoring the matrix of full attention weights into the product of two nonnegative low-rank factors (\App{app:sparse_attention}). The optimal row structure of $P$ then depends on these queries. Near-optimal rows average contiguous temporal groups when future attention spreads smoothly over history, and select single frames verbatim when it concentrates instead on a few salient past frames. These two attention patterns motivate the two variants we introduce next, \wtfield (\Sec{subsec:wtfield}) and \wtlandmark (\Sec{subsec:wtlandmark}).

\subsection{\wtfield: Canonical Key Averaging}
\label{subsec:wtfield}

In order to maintain temporal coherence when future attention spreads smoothly over history, we propose \wtfield, which instantiates the averaging case of \Sec{subsec:sparse_overview} by having each summary row of $P$ uniformly average the contiguous temporal group of frames assigned to its slot, compressing distant history into rotation-invariant slot traces.
To avoid the phase-cancellation failure mentioned in \Sec{sec:phase_cancel_failure}, we fix the compression domain by aligning all source keys to a shared phase before averaging, i.e., the canonical (unrotated) representation:
\begin{definition}[Canonical Key Averaging (\wtfield operator)]
\label{def:rdc}
For each temporal head-dimension pair $\idxFreq$, the compressed key at virtual position $\tv$ is:
\begin{equation}
\Kfield
\;=\; \Rot{\ropefreq \tv} \, \smash{\frac{1}{\numSourceFrames}}\sum\nolimits_{\idxSrc=1}^{\numSourceFrames} \Rot{-\ropefreq \timeSym_{\idxSrc}} \, \Krot,
\label{eq:rdc}
\end{equation}
\end{definition}
\noindent \ie we unrotate each key to its canonical content $\Rot{-\ropefreq \timeSym_{\idxSrc}}\,\Krot$, average in that space, and then re-encode at virtual position $\tv$. Values are not rotated, as only keys carry RoPE. Slot values are computed as the mean of the source-frame values assigned to the slot. The count $\numSourceFrames$ follows from the grouping rather than being a separate hyperparameter. The $\numPastFrames{-}\numRecentSlots$ frames that have left the recent window are split into $\numSummarySlots$ contiguous temporal groups (oldest to newest, slot $\slotIdx$ receives group $\slotIdx$), giving $\numSourceFrames \approx \numPastFrames/\numSummarySlots$ per slot, a ratio that grows linearly with generation length $\numChunks$.

Every new chunk pushes another frame out of the recent window and changes which frames each group covers, so the slot averages have to be maintained as the rollout grows. Frames are never averaged away irreversibly. The writer keeps the canonical key of each evicted frame in host memory and recomputes every slot mean from its own source set at each step, so a moving boundary always acts on source frames and never on an already averaged quantity. Peak GPU memory therefore stays equal to the sliding-window baseline because the retained keys live outside the cache (\App{app:memory_accounting}). 
\begin{remark}[Mean attention preservation, informal]
\label{rem:mean_attn}
Since a summary slot stands in for its $\numSourceFrames$ source frames, a query should attend to it roughly as it would have attended to those frames, which makes any compression that silently changes these attention scores a source of bias toward or away from distant history. \Def{def:rdc} provides exactly this guarantee by constructing a compressed key that preserves the mean attention score the source keys $\{\Krot\}_{\idxSrc=1}^{\numSourceFrames}$ would receive if each were reassigned the shared virtual position $\tv$ (formal statement in \App{app:wtfield_details}, \Prop{thm:rdc}). Averaging in the rotated domain offers no such guarantee, as phase cancellation shrinks the compressed key and suppresses the score (\Sec{sec:phase_cancel_failure}). The guarantee is stated on pre-softmax scores, and it does not extend to the weights themselves, since after softmax each attention weight also depends on every other cached key.
\end{remark}

\subsection{\wtlandmark: Landmark Traces with Frozen Keys}
\label{subsec:wtlandmark}

To optimize for episodic recall, \wtlandmark fills the $\numSummarySlots$ slots with verbatim high-value past frames rather than averaged summaries. This is akin to Landmark Attention~\citep{mohtashami2023landmark}. However, unlike token-level landmark insertion in trained transformers, our method uses scene-entry detection, frozen canonical keys, and slot-rank virtual positions. 

\paragraph{Landmark selection.}
We identify recall-relevant frames from the canonical-K representation of \Eq{eq:rdc}, computing for each incoming frame the cosine distance between the canonical keys of consecutive frames and marking any frame whose distance spikes above threshold $\sbThreshold$ as a scene-entry event. \wtlandmark fills the $\numSummarySlots$ summary slots with the most recent scene-entry frames verbatim, and while fewer than $\numSummarySlots$ scene-entry events have been detected, the remaining slots repeat the oldest available landmark. Once more than $\numSummarySlots$ scene-entry events have been detected over a long rollout, the oldest is evicted to make room for the newest.
Landmark slots use the same position assignment as \Eq{eq:vpos}, so each landmark frame inherits the slot-rank position of its summary slot rather than its absolute timestamp.

\paragraph{Frozen landmark keys.}
Slots are ordered from oldest ($\slotIdx{=}0$) to newest, so each new chunk shifts every cached frame one slot toward the older end, moving a landmark frame from slot $\slotIdx$ at chunk $\idxChunk$ to slot $\slotIdx{-}1$ at chunk $\idxChunk{+}1$. Under standard summary updates, each shift unrotates the cached Key to its canonical form, then re-rotates it to its new virtual position. Over many shifts, these unrotate-rerotate cycles accumulate floating-point errors. This is exacerbated in practice by bfloat16 precision, as shown by \citet{wang2025anchor}. 
\wtlandmark removes this drift by freezing each selected key in canonical form, storing that key once at landmark time and applying a single fresh rotation to the current virtual position at every subsequent shift:
\begin{equation}
\Klandslot[\tvslot]
\;=\; \Rot{\ropefreq \tvslot} \, \Rot{-\ropefreq \landmarkTime} \, \Klandsrc,
\label{eq:landmark}
\end{equation}
where $\landmarkTime$ is the original timestamp of the selected landmark frame. Compared to \Eq{eq:rdc}, the form is identical (unrotate to canonical, re-rotate to virtual position), with $\numSourceFrames{=}1$ and no averaging. The canonical key $\Rot{-\ropefreq \landmarkTime}\,\Klandsrc$ is computed once at landmark time and reused across all subsequent shifts. The canonical-caching mechanism is shared with concurrent work that stores RoPE keys canonically and rotates at attention time (\eg MemRoPE~\citep{kim2026memrope}). \wtlandmark differs in (i) applying it selectively at detected scene-entry events rather than to every cached key, and (ii) combining it with the slot-rank position assignment, so each frozen landmark occupies a distinct in-distribution virtual position at any horizon. 

\paragraph{A unified view.}
Viewed through the structured sparse attention lens in \Sec{subsec:sparse_overview}, \wtfield and \wtlandmark differ only in the choice of the projection matrix $P$, each matched to a family of future queries. Under \wtfield (\Sec{subsec:wtfield}), each summary row uniformly averages keys over a disjoint temporal group, yielding exactly the canonical key averaging of \Eq{eq:rdc}. This is near-optimal when attention varies smoothly within each compressed slot, so that future queries care about group-level history rather than individual frame identity (\Eq{eq:field_assumption}, \App{app:sparse_attention}). Under \wtlandmark, each summary row instead selects a single detected scene-entry frame verbatim, recovering its frozen canonical key as in \Eq{eq:landmark}. This is near-optimal when recall queries concentrate on a small set of salient past scenes (\Eq{eq:landmark_assumption}, \App{app:sparse_attention}). Together the two variants cover the query families that dominate long-horizon rollouts, smooth continuation and episodic recall, and both realize the same slot-rank, canonical-store design, differing only in the rows of $P$.

\subsection{Position-Content Coupling}
\label{subsec:joint_design}

As noted in \Sec{sec:rethinking}, position assignment and content compression are coupled. Concretely, the virtual positions assigned to summary slots can diverge from the offset range under which their keys were compressed, so the cached content may no longer match what the query attends to. Block-relative collapse makes compressed entries effectively unaddressable, reducing compression to sliding-window eviction once addressability collapses (\Rem{rem:blockrel_collapse}). Averaging position-conditioned keys under one offset distribution and querying them under another changes the effective phase of cached tokens, breaking the mean-attention preservation that motivates \Def{def:rdc}.

\worldtrace resolves this coupling by combining two mechanisms. Slot indexing (\Def{def:vpos}) fixes a consistent virtual position scheme, while canonical key averaging (\wtfield) and frozen landmark keys (\wtlandmark) absorb the source-to-virtual shift by unrotating keys to canonical space and re-encoding at slot-rank positions. Within this framework, the choice of slot content, averaged frames or frozen landmarks, becomes a content-only design choice.
We provide the detailed algorithm in \App{app:impl}.

\section{Experiments}
\label{sec:experiments}
We organize our experiments around our three claims. %
\textbf{(Q1, \Sec{sec:coherence})} Is position, not content, the binding constraint? %
\textbf{(Q2, \Sec{sec:coherence})} Does \wtfield improve \emph{coherence} over naive averaging and a sliding-window cache?
\textbf{(Q3, \Sec{sec:loop})} Can \wtlandmark improve \emph{episodic recall} at extended horizons?

\subsection{Setup}
\label{subsec:setup}

\paragraph{Models.}
Our evaluation uses Matrix-Game-2 (MG2-1.3B)~\citep{matrixgame2025}, a distilled 1.3B-parameter autoregressive game world model based on Wan 1.3B T2V~\citep{wan2025} with 3D-RoPE, training-time KV-cache extent $\trainLen{} = 6$ AR chunks ($\framesPerBlock{=}3$ latent frames per chunk), and a local attention window of $2$ chunks, \ie $\trainOffset{+}1{=}6$ latent frames.
We provide additional experiments for LingBot-World~\citep{lingbot-world} in \App{app:extra_results}, an autoregressive world model built on a 14B backbone with Pl\"{u}cker camera conditioning. 
We use $\numSummarySlots{=}2/\numRecentSlots{=}4$ for coherence (\Sec{sec:coherence}) and $\numSummarySlots{=}4/\numRecentSlots{=}2$ for episodic recall (\Sec{sec:loop}) (latent-frame slots, with $\numSummarySlots + \numRecentSlots = \localAttnSize{=}6$).

\paragraph{Metrics.}
Our evaluation focuses on two key aspects relevant to world models: the temporal \emph{coherence} of generated outputs and the ability to correctly recall revisited locations. We evaluate \emph{coherence} with $\TempSSIM$ ($\uparrow$, SSIM~\citep{wang2004ssim} between consecutive decoded frames) and Local Scene Drift ($\downarrow$, mean per-chunk CLIP feature distance to the preceding chunk). For \emph{recall}, we use Position-Aligned CLIP ($\PAC$ $\uparrow$, CLIP-ViT-H/14~\citep{radford2021clip} cosine similarity between geometrically paired return- and forward-leg frames in loops).

\paragraph{Baselines.}
We compare against MG2's default sliding-window cache throughout, which evicts the oldest tokens first-in-first-out with no position correction or content compression, plus controlled variants that isolate position assignment (Block-relative, Centroid-linear, \worldtrace) and content compression (naive vs.\ canonical averaging, \Sec{sec:coherence}, \Sec{sec:ablation}).
Episodic recall compares three retention tiers (compression, canonical-K anchoring via \emph{Latent re-anchor}, and verbatim landmarks) on \loopbench and in the $\PAC$ sweep (\Tab{tab:loopbench_full}, \Tab{tab:pac} in \App{app:pac_sweep}).
MemRoPE~\citep{kim2026memrope}, YaRN~\citep{peng2023yarn}, and Landmark$+$Block-relative are evaluated in \App{app:memrope}.
\paragraph{Benchmark.}
We introduce \loopbench, a \emph{memory} benchmark for scene revisits in AR world models. The model traces waypoints before returning to a previously visited location, and the regenerated return frame is scored against the original scene appearance at geometrically matched positions, requiring no external reference. \loopbench varies four properties of the path: waypoint count ($\loopK$, \Fig{fig:loops}), rollout length ($\numChunks$), camera orientation, and multi-revisit depth ($\loopR$). The full benchmark gallery with all evaluated configurations is in \Fig{fig:loopbench_gallery} (\App{app:loopbench}).

\begin{figure}[t]
\centering
\definecolor{loopboxblue}{HTML}{A9B8E0}
\definecolor{looptabblue}{HTML}{CBD7F2}
\tikzset{
  nd/.style={draw, circle, fill=gray!20, minimum size=0.50cm, inner sep=0pt},
  loopout/.style={->, blue!70},
  loopret/.style={->, blue!70, dashed},
  panel/.style={>=Stealth, thick, font=\scriptsize\rmfamily},
  panelbox/.style={draw=loopboxblue, line width=1.2pt, rounded corners=10pt,
                   inner sep=0pt, minimum width=3.4cm, minimum height=3.4cm},
  paneltab/.style={fill=looptabblue, rounded corners=3.5pt, inner xsep=8pt,
                   inner ysep=3.5pt, font=\footnotesize\rmfamily, text=black}}

\begin{tikzpicture}[baseline=(p.center)]
  \begin{scope}[panel, local bounding box=bb]
    \node[nd] (a) at (0.8,0)   {A};
    \node[nd] (b) at (0.8,2.1) {B};
    \draw[loopout, bend left=24] (a) to node[right]{\scriptsize $\times\!8$} (b);
    \draw[loopret, bend left=24] (b) to (a);
  \end{scope}
  \node[panelbox] (p) at (bb.center) {};
  \node[paneltab] at (p.north) {\textbf{ABA}\enspace $\numChunks{=}16$};
\end{tikzpicture}%
\hspace{0.6cm}%
\begin{tikzpicture}[baseline=(p.center)]
  \begin{scope}[panel, local bounding box=bb]
    \node[nd] (a) at (0,0)       {A};
    \node[nd] (b) at (0,1.90)    {B};
    \node[nd] (c) at (1.85,1.90) {C};
    \draw[loopout] (a) -- node[left,  xshift=1pt ]{\scriptsize $\times\!5$} (b);
    \draw[loopout] (b) -- node[above, yshift=-1pt]{\scriptsize $\times\!5$} (c);
    \draw[loopret] (c) -- node[right, xshift=-1pt]{\scriptsize $\times\!7$} (a);
  \end{scope}
  \node[panelbox] (p) at (bb.center) {};
  \node[paneltab] at (p.north) {\textbf{ABCA}\enspace $\numChunks{=}17$};
\end{tikzpicture}%
\hspace{0.6cm}%
\begin{tikzpicture}[baseline=(p.center)]
  \begin{scope}[panel, local bounding box=bb]
    \node[nd] (a) at (0,0)       {A};
    \node[nd] (b) at (0,1.90)    {B};
    \node[nd] (c) at (1.90,1.90) {C};
    \node[nd] (d) at (1.90,0)    {D};
    \draw[loopout] (a) -- node[left ]{\scriptsize $\times\!4$} (b);
    \draw[loopout] (b) -- node[above]{\scriptsize $\times\!4$} (c);
    \draw[loopout] (c) -- node[right]{\scriptsize $\times\!4$} (d);
    \draw[loopret] (d) -- node[below]{\scriptsize $\times\!4$} (a);
  \end{scope}
  \node[panelbox] (p) at (bb.center) {};
  \node[paneltab] at (p.north) {\textbf{ABCDA}\enspace $\numChunks{=}16$};
\end{tikzpicture}
\caption{\textbf{Three \loopbench geometries.} We propose \loopbench, a memory benchmark with diverse geometries, including ABA (straight reversal), ABCA (approximate L-triangle with diagonal $5\sqrt{2} \approx 7$), and ABCDA (square). $\numChunks$ is rollout length in AR chunks. Each path returns to scene~A. We provide a full gallery of edge-length, orientation, and multi-revisit settings in \App{app:loopbench} (\Fig{fig:loopbench_gallery}).}
\label{fig:loops}
\end{figure}

\begin{wraptable}{r}{0.40\columnwidth}
  \vspace{-10pt}
  \centering
  \setlength{\tabcolsep}{4pt}
  \small
  \captionsetup{font=small,skip=2pt}
  \caption{\textbf{Position assignment.} Canonical averaging stays the same so that only positions vary. Values are $\TempSSIM$ ($\uparrow$).}
  \label{tab:pos_ablation}
  \begin{tabular}{@{}lcc@{}}
    \toprule
    Position Assignment & $\numChunks{=}8$ & $\numChunks{=}16$ \\
    \midrule
    Block-relative & 0.390 & 0.530 \\
    Centroid-linear & 0.377 & 0.479 \\
    \textbf{\worldtrace\ (ours)} & \textbf{0.413} & \textbf{0.545} \\
    \bottomrule
  \end{tabular}
\end{wraptable}%

\subsection{Coherence: \wtfield}
\label{sec:coherence}

\textbf{Virtual Position Assignment.}
\wtfield averages keys in the canonical domain, then assigns each averaged slot a virtual position, maintaining scene coherence under compression. We compare it against canonical-K averaging variants that apply this same key averaging but differ in position assignments.
\emph{Block-relative}~\citep{infinityrope2026} clamps offsets beyond the training horizon, so distinct older slots collapse onto a single capped position, whereas \emph{Centroid-linear} positions each slot by the average timestamp of the frames it summarizes, linearly mapped into $[\tminv, \tmaxv]$, so slots stay distinct but their positions shift as the rollout grows. Relative to the four position properties of \Sec{subsec:worldtrace_positions}, Centroid-linear satisfies (i) linearity and (ii) in-distribution placement but violates (iii) horizon-stability, since its virtual positions depend on $\numChunks$.
Instead, our \wtfield assigns positions from slot rank alone, so every summary slot stays in-distribution and individually addressable no matter how long the rollout runs.

With the same compression approach but different position assignment shows that compression helps only when the summary slots remain addressable.
With Block-relative positions, offsets beyond the training window receive zero softmax weight under local attention mask, so canonical averaging reduces to a plain sliding-window cache.
Centroid-linear avoids this saturation by keeping positions within the attention mask, but its $\numChunks$-dependent assignment shifts old summaries into positional ranges the model was not trained to use as the horizon grows.
Our \worldtrace instead assigns positions by slot rank alone and keeps summary slots in-distribution at all horizons, outperforming Block-relative by $+5.9\%$ and $+2.8\%$ $\TempSSIM$ and Centroid-linear by $+9.5\%$ and $+13.8\%$ at $\numChunks{=}8$ and $\numChunks{=}16$ (\Tab{tab:pos_ablation}).

We show this qualitatively in \Fig{fig:qualitative}, where by $\idxChunk{=}18$ the baselines drift from the initial scene context and generate incoherent geometry, while our \wtfield stays coherent. We quantify it at two horizons ($\numChunks{=}32, 48$) in \Tab{tab:coherence}, under the same canonical averaging setting. At $\numChunks{=}32$, \wtfield achieves higher $\TempSSIM$ ($0.613$ vs.\ $0.571$), whereas Centroid-linear reaches the lowest Drift but at $0.04$ lower $\TempSSIM$. At $\numChunks{=}48$, \wtfield is best on \emph{both} dimensions ($+15.5\%$ relative gain on $\TempSSIM$, lowest Drift), while a position assignment that depends on $\numChunks$ (Centroid-linear at $0.479$) ranks inconsistently across horizons, as its slot indices shift out-of-distribution once $\numChunks$ exceeds the training window.
These results show that horizon-independent position assignment is the bottleneck for long-horizon generation.

\begin{figure}[t]
  \centering
  \includegraphics[width=\linewidth]{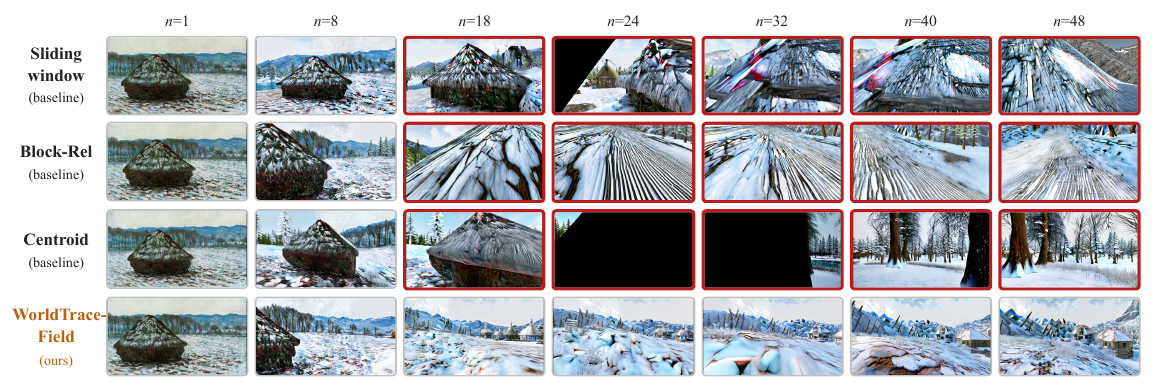}
  \caption{\textbf{\wtfield at number of chunks $\numChunks{=}48$.} Four KV-cache approaches conditioned on the same initial image and camera trajectory, including three baselines (sliding window, Block-relative, Centroid-linear) and \wtfield (ours). Baseline approaches start drifting at $\idxChunk{=}18$ (\textcolor[HTML]{B91C1C}{red} border). \wtfield remains coherent across the horizon. Camera path plotted in \Fig{fig:field_traj} (\App{app:impl}).
  } 
  \label{fig:qualitative}
\end{figure}

\begin{table}[t]
\centering
\caption{\textbf{\wtfield achieves the highest $\TempSSIM$ at both horizons and the lowest Scene Drift.} At $\numChunks{=}32$, \wtfield has the highest $\TempSSIM$ ($0.613$) though centroid variants have lower Drift. At $\numChunks{=}48$, \wtfield is best on both metrics.}
\label{tab:coherence}
\resizebox{\columnwidth}{!}{%
\begin{tabular}{l cc cc}
\toprule
 & \multicolumn{2}{c}{$\numChunks{=}32$} & \multicolumn{2}{c}{$\numChunks{=}48$} \\
\cmidrule(lr){2-3}\cmidrule(lr){4-5}
Method & $\TempSSIM$ $\uparrow$ & Scene Drift $\downarrow$ & $\TempSSIM$ $\uparrow$ & Scene Drift $\downarrow$ \\
\midrule
Sliding window                           & 0.571 & 0.0229 & 0.472 & 0.0305 \\
Canonical averaging $+$ Block-relative           & 0.585 & 0.0215 & 0.530 & 0.0339 \\
Canonical averaging $+$ Centroid-linear         & 0.573 & \textbf{0.0211} & 0.479 & 0.0297 \\
\midrule
\textbf{\wtfield (ours)}                           & \textbf{0.613} & 0.0250 & \textbf{0.545} & \textbf{0.0295} \\
\bottomrule
\end{tabular}%
}
\end{table}

\subsection{Episodic Recall: \wtlandmark}
\label{sec:loop}

While \Sec{sec:coherence} evaluates \wtfield on open-ended rollouts, episodic recall asks whether the cache can restore a \emph{specific} past scene at revisit points. We therefore test \wtlandmark (\Sec{subsec:wtlandmark}) on \loopbench (\Fig{fig:loops}, \Sec{subsec:setup}) and score each return frame against scene~A at matched poses with $\PAC$. We show the qualitative results in \Fig{fig:landmark_qualitative}, where the sliding-window baseline fails to reconstruct scene~A at revisits, whereas \wtlandmark matches it across all three geometries.

\begin{figure}[t]
  \centering
  \includegraphics[width=\linewidth]{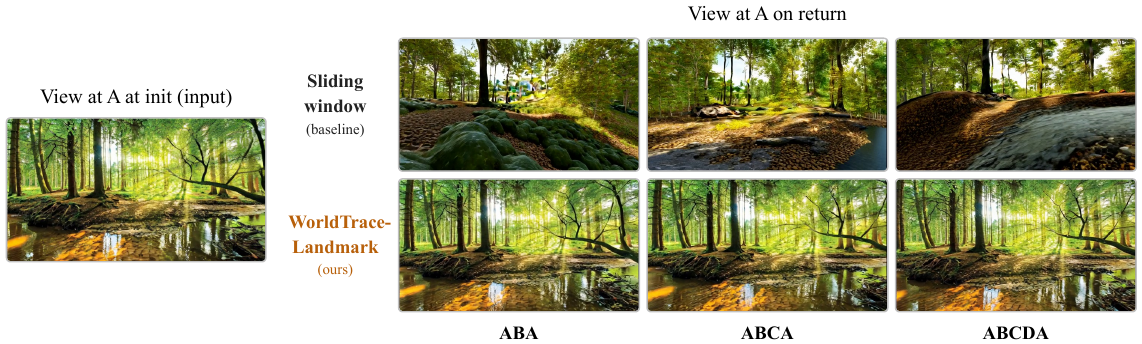}
  \caption{\textbf{\loopbench qualitative results} on the three revisit geometries in \Fig{fig:loops} (ABA, ABCA, and ABCDA). The sliding-window baseline fails to reconstruct scene~A when revisiting. \wtlandmark matches scene~A in all three, keeping memory addressable across waypoints.}
  \label{fig:landmark_qualitative}
\end{figure}

\begin{table}[t]
\centering
\caption{\textbf{\wtlandmark improves episodic recall across topology, edge length, camera orientation, and multi-revisit depth.} Loop geometries and difficulty axes are in \Fig{fig:loopbench_gallery} (\App{app:loopbench}). Each tier groups methods (rows) by loop configuration (columns), and entries are $\PAC$ / $\TempSSIM$. $\numChunks$ is rollout length in AR chunks, and $n{=}100$ is the number of distinct initial scenes per condition (\Tab{tab:notation_core}). In Tier~4 ($\loopR{=}2$, row~4 of \Fig{fig:loopbench_gallery}), ABABA revisits scene~A, while ABCBA and ABCDBA revisit waypoint~B on palindrome/shortcut paths.}
\label{tab:loopbench_full}
\small
\setlength{\tabcolsep}{4pt}
\resizebox{\columnwidth}{!}{%
\begin{tabular}{l ccc}
\toprule
\multicolumn{4}{l}{\textit{Tier 1: varying topology}} \\[1pt]
 & ABA ($\loopK{=}1$, $\numChunks{=}16$) & ABCA ($\loopK{=}2$, $\numChunks{=}17$) & ABCDA ($\loopK{=}3$, $\numChunks{=}16$) \\
\cmidrule(l){2-4}
Sliding window        & $0.723_{\pm0.013}$ / $0.761$ & $0.673_{\pm0.014}$ / $0.740$ & $0.666_{\pm0.013}$ / $0.748$ \\
\textbf{\wtlandmark} & $\mathbf{0.864}_{\pm0.009}$ / $\mathbf{0.786}$ & $\mathbf{0.792}_{\pm0.011}$ / $\mathbf{0.758}$ & $\mathbf{0.799}_{\pm0.011}$ / $\mathbf{0.771}$ \\
\midrule
\multicolumn{4}{l}{\textit{Tier 2: varying rollout length ($\numChunks$, ABA)}} \\[1pt]
 & ABA short ($\numChunks{=}8$) & ABA ($\numChunks{=}16$) & ABA long ($\numChunks{=}32$) \\
\cmidrule(l){2-4}
Sliding window        & $0.859_{\pm0.007}$ / $0.780$ & $0.723_{\pm0.013}$ / $0.761$ & $0.627_{\pm0.013}$ / $0.771$ \\
\textbf{\wtlandmark} & $\mathbf{0.922}_{\pm0.004}$ / $\mathbf{0.789}$ & $\mathbf{0.864}_{\pm0.009}$ / $\mathbf{0.786}$ & $\mathbf{0.825}_{\pm0.009}$ / $\mathbf{0.787}$ \\
\midrule
\multicolumn{4}{l}{\textit{Tier 3: camera-orientation (agent fixed at A)}} \\[1pt]
 & Pan $90^\circ$ ($\numChunks{=}4$) & Pan $180^\circ$ ($\numChunks{=}8$) & Pan $360^\circ$ ($\numChunks{=}8$) \\
\cmidrule(l){2-4}
Sliding window        & $0.829_{\pm0.008}$ / $0.480$ & $0.671_{\pm0.014}$ / $0.458$ & $0.559_{\pm0.015}$ / $0.455$ \\
\textbf{\wtlandmark} & $\mathbf{0.861}_{\pm0.007}$ / $\mathbf{0.493}$ & $\mathbf{0.781}_{\pm0.010}$ / $\mathbf{0.486}$ & $\mathbf{0.577}_{\pm0.015}$ / $\mathbf{0.467}$ \\
\midrule
\multicolumn{4}{l}{\textit{Tier 4: multi-revisit}} \\[1pt]
 & ABABA ($\numChunks{=}32$) & ABCBA ($\numChunks{=}20$) & ABCDBA ($\numChunks{=}20$) \\
\cmidrule(l){2-4}
Sliding window        & $0.892_{\pm0.004}$ / $0.765$ & $0.825_{\pm0.010}$ / $0.751$ & $0.842_{\pm0.010}$ / $0.758$ \\
\textbf{\wtlandmark} & $\mathbf{0.941}_{\pm0.005}$ / $\mathbf{0.789}$ & $\mathbf{0.863}_{\pm0.009}$ / $\mathbf{0.771}$ & $\mathbf{0.876}_{\pm0.009}$ / $\mathbf{0.775}$ \\
\bottomrule
\end{tabular}%
}
\end{table}

We evaluate \wtlandmark across four difficulty tiers in \Tab{tab:loopbench_full}: \emph{topology} (ABA, ABCA, ABCDA), \emph{rollout length} (ABA at $\numChunks{\in}\{8,16,32\}$), \emph{camera-orientation} (pan $90^\circ$/$180^\circ$/$360^\circ$ with the agent staying at~A), and \emph{multi-revisit} (ABABA, ABCBA, ABCDBA). All runs use $\numSummarySlots{=}4$, $\numRecentSlots{=}2$.

Across tiers, \wtlandmark improves $\PAC$ over the sliding-window baseline in every reported condition, with the largest relative gains when the KV gap to scene~A is longest (Tier~2) and when multiple waypoints separate departure and return (Tier~1). Tier~3 is the hardest setting, where gains shrink at a full $360^\circ$ pan, and Tier~4 shows a narrower margin on palindrome returns that revisit waypoint~B before scene~A. At extended horizons, we report the $\PAC$ sweep in \Tab{tab:pac} (\App{app:pac_sweep}), where only verbatim landmark recall sustains $\PAC{\approx}0.99$ at $\numChunks{=}256$, against $0.610$ for canonical-K anchoring. We provide further comparisons with MemRoPE and YaRN, history-selection ablations, and slot-count sensitivity in \App{app:extra_results}.

\subsection{Ablation}
\label{sec:ablation}

We isolate two design axes. (1) \textbf{Position assignment} (\worldtrace vs.\ Block-relative vs.\ Centroid-linear), where holding the content operator fixed and varying only positions isolates whether position, not content, is the binding constraint for \wtfield. 
(2) \textbf{Canonical vs.\ naive key compression}, where canonical-domain averaging avoids phase cancellation (\Sec{sec:phase_cancel_failure}). Its benefit surfaces on coherence (TempSSIM, LatentDiff) rather than recall.  We use the ABA recall protocol with a fixed cache budget. 

\begin{table}[!ht]
\centering
\caption{\textbf{Position assignment \& canonical compression ablation (Axes 1\,\&\,2).} $\PAC$ ($\uparrow$) and $\TempSSIM$ ($\uparrow$) on ABA recall. 
}
\label{tab:component_ablation}
\setlength{\tabcolsep}{4pt}
\resizebox{\columnwidth}{!}{%
\begin{tabular}{lccccc}
\toprule
Method & \multicolumn{4}{c}{$\PAC$ $\uparrow$} & $\TempSSIM$ ($\numChunks{=}16$) $\uparrow$ \\
\cmidrule(lr){2-5}
 & $\numChunks{=}16$ & $\numChunks{=}64$ & $\numChunks{=}128$ & $\numChunks{=}256$ & \\
\midrule
\multicolumn{6}{l}{\footnotesize\textit{Axes 1\,\&\,2: position assignment \& canonical compression}} \\
Sliding window (baseline)                        & 0.540 & 0.412 & 0.504 & 0.631 & 0.9945 \\
$+$ Naive averaging (Block-relative)                  & 0.570 & 0.443 & 0.486 & 0.598 & 0.9944 \\
$+$ Field averaging (Block-relative)                  & 0.540 & 0.412 & 0.504 & 0.631 & 0.9945 \\
$+$ \worldtrace \textbf{(\wtfield, ours)}        & 0.555 & 0.442 & 0.495 & 0.602 & \textbf{0.9948} \\
\bottomrule
\end{tabular}%
}
\end{table}

\textbf{(1) Slot-rank positions keep summary slots addressable} (\Tab{tab:component_ablation}). 
Naive averaging in RoPE-rotated space scores marginally higher on $\PAC$ at $\numChunks{=}16$ ($0.570$ vs.\ $0.540$), but under the same Block-relative positions it cannot reflect addressable recall, and both stay in the compression tier (${\approx}0.4$ to $0.6$ $\PAC$) far below anchoring or verbatim recall (${\approx}0.95$+, \Tab{tab:pac}). Only \worldtrace keeps every summary slot in-distribution, and \wtfield accordingly recovers $\PAC{=}0.555$ against $0.540$ for the sliding-window baseline ($+0.015$), a margin that grows to $+0.030$ at $\numChunks{=}64$, while achieving the best $\TempSSIM$ ($0.9948$ vs.\ $0.9945$). %

\begin{wraptable}{r}{0.5\columnwidth}
\vspace{-0.6em}
\centering
\setlength{\tabcolsep}{4pt}
\small
\captionsetup{font=small,skip=2pt}
\caption{\textbf{Phase cancellation: Naive vs.\ canonical key averaging.} LatentDiff ($\downarrow$) at $\numChunks{=}16$ rollout chunks for varying numbers of summary-cache slots $\numSummarySlots$. Canonical averaging avoids the phase cancellation that corrupts low frequencies under naive RoPE-space averaging.
}
\label{tab:phase_cancel}
\begin{tabular}{@{}ccc@{}}
\toprule
$\numSummarySlots$ & Naive LatentDiff $\downarrow$ & Canonical LatentDiff $\downarrow$ \\
\midrule
1 & 0.257 & \textbf{0.224} \\
2 & 0.261 & \textbf{0.257} \\
4 & 0.312 & \textbf{0.233} \\
\bottomrule
\end{tabular}
\end{wraptable}

\textbf{(2) Canonical key averaging avoids phase cancellation and improves coherence} (\Tab{tab:phase_cancel}). Holding positions fixed at Block-relative, switching from naive to canonical averaging reduces LatentDiff from $0.312$ to $0.233$ at $\numSummarySlots{=}4$ ($-25.3\%$). Under naive averaging, LatentDiff grows with $\numSummarySlots$ (from $0.257$ at $\numSummarySlots{=}1$ to $0.312$ at $\numSummarySlots{=}4$) as each additional summary slot mixes keys from frames at more diverse timestamps %
amplifying RoPE phase cancellation, while canonical averaging is unaffected ($0.224$ to $0.257$). Averaging in canonical (unrotated) space is therefore necessary to preserve the low-frequency content of compressed summary Keys. 

\section{Conclusion}
\label{sec:conclusion}

Long-horizon failure in autoregressive video world models is not only a matter of retaining more past content, but of keeping that content addressable. We show that temporal RoPE extrapolation can make compressed memories effectively unreadable once rollouts move beyond the training horizon, and introduce WorldTrace, a training-free cache mechanism that keeps distant traces at fixed in-distribution virtual positions. Within this addressable cache, WorldTrace-Field provides a coherence-oriented summary of history, while WorldTrace-Landmark preserves verbatim traces for episodic recall. Empirically, WorldTrace-Field improves Temporal Consistency by +15.5\% at $N=48$, and WorldTrace-Landmark raises Scene Consistency by +19.5\% on LoopBench ABA loops, extending visually persistent generation from seconds toward minute-scale return-to-origin rollouts.

More broadly, WorldTrace suggests that memory for video world models should be designed around two questions: how to ensure past content remains addressable, and what kind of visual traces should be stored for future queries. The two variants studied here represent different points in this design space, with smooth summaries supporting temporal coherence and sparse landmarks supporting recall of specific places. This perspective also clarifies the limits of the current method. WorldTrace is designed for temporal-RoPE autoregressive models with a fixed KV-cache budget, and its current content writers instantiate only two simple structured projections of history, so that WorldTrace-Field favors coherence but can blur specific scene details, while WorldTrace-Landmark preserves specific places but depends on detecting and retaining the right scene entries. Future memory systems may need to choose these projections more adaptively, especially important in interactive settings where agents move freely, revisit locations, and query the past from changing viewpoints. In this view, WorldTrace-Field and WorldTrace-Landmark are not endpoints, but two structured approximations to the same underlying problem of projecting a long visual history into a small addressable cache.

\section*{Acknowledgements}
We thank our collaborators at NVIDIA's Spatial Intelligence Lab and Princeton University for their helpful discussions and feedback.

\renewcommand{\bibfont}{\small\linespread{0.94}\selectfont}
\setlength{\bibsep}{2pt}
\bibliographystyle{plainnat}
\bibliography{references}

\appendix
\newpage
\clearpage

\newpage
\appendix
\appendixpage

\hypertarget{toc}{}
\startcontents[sections]
\begingroup
\small
\setlength{\parskip}{0pt}
\printcontents[sections]{l}{1}{\setcounter{tocdepth}{2}}
\endgroup

\pagestyle{fancy}
\renewcommand{\headrulewidth}{0pt}
\fancyhead{}
\fancyfoot[L]{\hyperlink{toc}{Back to Table of Contents}}
\fancyfoot[C]{\thepage}
\fancyfoot[R]{\hyperlink{abstract}{Back to the First Page}}

\newpage
\section{Notation}
\label{app:notation}

\begin{table}[!h]
\centering
\captionsetup{justification=centering}
\caption{\textbf{Glossary and notation.} }
\label{tab:notation_core}
\renewcommand{\arraystretch}{1.2}
\setlength{\aboverulesep}{0.2ex}
\setlength{\belowrulesep}{0.35ex}
\footnotesize
\begin{threeparttable}
\begin{tabular}{l p{0.62\linewidth}}
\toprule
\textbf{Symbol} & \textbf{Description} \\
\midrule
\multicolumn{2}{l}{\textit{Cache Structure}} \\
$\numChunks$         & Total number of AR chunks in a rollout (generation length, $\framesPerBlock$ latent frames per chunk) \\
$\numSummarySlots$   & Number of summary slots in the KV cache (one latent frame each) \\
$\numRecentSlots$    & Number of recent-window slots (one verbatim latent frame each) \\
$\localAttnSize$     & Local attention / inference KV window in latent frames ($\numSummarySlots+\numRecentSlots = \localAttnSize$) \\
$\trainLen$          & Training-time KV-cache extent in AR chunks during Self-Forcing rollouts (distinct from the local attention window $\localAttnSize$) \\
$\framesPerBlock$    & Latent frames per AR chunk \\
$\numSourceFrames$   & Source frames compressed into one summary slot under \wtfield \\
$T$              & Total past latent frames at the current chunk (grows linearly with $\numChunks$). The newest $\numRecentSlots$ stay verbatim, and the rest are compressed into the $\numSummarySlots$ summary slots \\
$\slotIdx$           & Summary slot index, $\slotIdx = 0,\ldots,\numSummarySlots{-}1$ (0 = oldest) \\
$\recentCache$       & Recent-window cache: verbatim KVs of the newest $\numRecentSlots$ latent frames \\
$\summaryCache$      & Summary cache: $\numSummarySlots$ compressed slots indexed by $\slotIdx$ \\
\midrule
\multicolumn{2}{l}{\textit{Indices}} \\
$\idxFreq$         & RoPE frequency-pair index \\
$\idxSrc$          & Source-frame iteration index (inside $\sum\nolimits_{\idxSrc=1}^{\numSourceFrames}$) \\
$\idxChunk$        & AR-chunk index  \\
\midrule
\multicolumn{2}{l}{\textit{Virtual Position Assignment (Slot Indexing)}} \\
$\qpos$    & Absolute timestamp of the current query frame \\
$\kpos$    & Absolute timestamp of a cached key frame \\
$\tminv$   & In-distribution lower bound: $\max(0,\; \qpos - \trainOffset)$ \\
$\tmaxv$   & In-distribution upper bound for summary slots: $\qpos - \numRecentSlots$ \\
$\tv$      & Virtual position \\
$\tvslot$  & Virtual position of slot $\slotIdx$: $\qpos - (\localAttnSize{-}1{-}\slotIdx)$ \\
\midrule
\multicolumn{2}{l}{\loopbench \textit{Benchmark}} \\
$\loopK$           & Number of intermediate waypoints between departure and return to scene~A \\
$\loopR$           & Multi-revisit depth: number of times the repeated waypoint is visited in one path (\eg ABABA revisits scene~A twice) \\
\midrule
\multicolumn{2}{l}{\textit{RoPE Parameters}} \\
$\ropeBase$         & RoPE base frequency ($\ropeBase{=}10000$ for MG2) \\
$\ropefreq$         & RoPE angular frequency for temporal head-dimension pair $\idxFreq$ ($\ropefreq = \ropeBase^{-\idxFreq/\numTempPairs}$) \\
$\numTempPairs$, $\numSpatialPairsH$, $\numSpatialPairsW$ & Number of temporal, height, and width RoPE complex pairs per head \\
$\numLayers$        & Number of transformer layers \\
$\Rot{\alpha}$      & Rotation matrix by angle $\alpha$ (applied per frequency pair) \\
\midrule
\multicolumn{2}{l}{\textit{Key / Query Variants}} \\
$\Krot$             & RoPE-rotated key at absolute position $\timeSym_{\idxSrc}$, frequency pair $\idxFreq$ \\
$\Kcx$              & Canonical (unrotated) key content: $\Rot{-\ropefreq \timeSym_{\idxSrc}}\,\Krot$ \\
$\Kcxmean$          & Canonical mean across $\numSourceFrames$ source frames  \\
$\Knaive$           & Naive RoPE-space average of $\numSourceFrames$ rotated keys \\
$\Kfield$           & \wtfield compressed key at virtual position $\tv$  \\
$\Klandslot$        & \wtlandmark frozen canonical key at virtual position $\tv$  \\
$\landmarkTime$     & Original timestamp of a selected landmark frame \\
$\Klandsrc$         & Landmark source key at $\landmarkTime$, frequency pair $\idxFreq$ \\
$\Qrope$            & RoPE-rotated query at $\qpos$, frequency pair $\idxFreq$ \\
\midrule
\multicolumn{2}{l}{\textit{Attention Quantities}} \\
$\relOffset$        & Query-key temporal offset: $\qpos - \kpos$ \\
$\trainOffset$      & Largest trained temporal offset ($\trainOffset{=}5$ for MG2, $\localAttnSize{=}\trainOffset{+}1$) \\
$\attnLogit$        & Attention-score contribution from frequency pair $\idxFreq$ at offset $\relOffset$ \\
$\contentTerm$      & Query-key content term in canonical coordinates for $\attnLogit$ \\
$\alpha_q$       & Full attention weights for query $q$ (softmax over all $T$ past frames): $\alpha_q \in \Delta_T$ \\
$\mathcal{Q}$    & Query set in the aggregate mismatch objective (\App{app:sparse_attention}) \\
\bottomrule
\end{tabular}
\end{threeparttable}
\end{table}

\section{\worldtrace as Structured Sparse Attention}
\label{app:sparse_attention}

Under autoregressive decoding with full attention over the history, each new chunk attends to all $\numPastFrames$ past frames, so both the per-step attention cost and the KV cache grow as $\bigO{\numPastFrames}$ with generation length.
Video world models typically cap this with a fixed-size sliding window over the most recent frames. Our \worldtrace instead introduces \emph{structured sparse approximations} that aim to compress the full $\numPastFrames$-frame history to $\localAttnSize \ll \numPastFrames$ effective tokens via a projection matrix $\projMat \in \Reals^{\localAttnSize \times \numPastFrames}$, where each row defines which frames contribute to one compressed slot.
Throughout this section, $\keySym$ denotes the canonical (unrotated) key matrix, which is the domain in which the canonical-store operator applies $\projMat$. The slot-rank re-encoding of \Def{def:vpos} is applied afterwards at attention time, which is the same fixed, in-distribution rotation on the compressed cache.

\subsection{Full and Projected Attention}
\label{app:proj_setup}
For a single query $\queryVec$, the standard attention output is $\attnOp(\queryVec,\keySym,\valueSym) = \softmaxOp(\queryVec\,\keySym^\top)\,\valueSym = \fullAttnWeights\,\valueSym$, a weighted sum over all $\numPastFrames$ value vectors with $\fullAttnWeights \in \simplex{\numPastFrames}$, where $\simplex{\numPastFrames}$ is the probability simplex over the $\numPastFrames$ past frames (non-negative weights summing to one).
The \emph{projected} attention approximation replaces the $\numPastFrames$-token sequence with $\localAttnSize \ll \numPastFrames$ prototype tokens (one per compressed slot) defined by a row-stochastic matrix $\projMat \in \Reals^{\localAttnSize \times \numPastFrames}$, meaning each row of $\projMat$ is itself a distribution over past frames (this property is what makes the bounds of \Sec{app:proj_quality} non-vacuous):
\begin{equation}
  \projAttnOp(\queryVec,\keySym,\valueSym;\,\projMat) \;=\; \softmaxOp(\queryVec\,(\projMat\keySym)^\top)\,\projMat\valueSym,
  \label{eq:hat_attn}
\end{equation}
where $\protoAttnWeights = \softmaxOp(\queryVec\,(\projMat\keySym)^\top) \in \simplex{\localAttnSize}$ are the prototype attention weights, now a distribution over the $\localAttnSize$ compressed slots rather than the $\numPastFrames$ frames, and $\projMat_i \valueSym$ is the compressed value for slot $i$.

\paragraph{Relation to prior structured-attention approximations.}
\Eq{eq:hat_attn} is the low-rank factorization of attention studied by efficient transformer architectures, whose projections are learned or trained end-to-end~\citep{wang2020linformer,xiong2021nystromformer,vyas2020clustered,bolya2023tome,vasylenko2025sparse,leng2025hsa}. \worldtrace differs mainly in that the projection is applied \emph{training-free} at inference time to a pretrained model's KV cache. Beyond this, two design choices follow from the RoPE-based world-model setting rather than marking a contrast with these methods. First, $\projMat$ acts in the canonical key domain, so merging avoids the phase cancellation (\Sec{sec:phase_cancel_failure}). Second, the compressed slots are re-encoded at slot-rank virtual positions (\Def{def:vpos}), keeping them addressable beyond the training horizon.
The row structure of $\projMat$ (uniform averaging for \wtfield, one-hot selection for \wtlandmark) then determines whether compression favors coherence or recall, as formalized below.

\subsection{Structured Attention Approximations Unified by $\projMat$}
\label{app:p_variants}

The four cache structures (\Fig{fig:proj_matrices}) compared in our experiments correspond to four structured choices of $\projMat$.

\paragraph{Sliding window attention.}
Here $\projMat = [\,0\;\;\identMat{\localAttnSize}\,]$ makes each row a one-hot vector selecting one of the most recent $\localAttnSize$ frames, discarding all earlier history (\Fig{fig:proj_matrices}a). Adding an \emph{attention sink anchor} is a simple modification (\Fig{fig:proj_matrices}b) in which row 0 is pinned to the first (sink) frame while the remaining $\localAttnSize{-}1$ rows select the most recent $\localAttnSize{-}1$ frames verbatim, always keeping a global context token in cache regardless of generation length. This anchor mitigates the collapse of the attention distribution onto only local tokens at long generation horizons~\citep{xiao2024streamingllm}.

\paragraph{\wtfield.}
Rows corresponding to the $\numRecentSlots$ recent slots are identity rows as above.  Each remaining row is a uniform average over the $\numSourceFrames$ frames assigned to its summary slot (\Fig{fig:proj_matrices}c), setting $\projMat_{i,t} = 1/\numSourceFrames$ if frame $t$ is assigned to slot $i$ and $0$ otherwise.

\paragraph{\wtlandmark.}
Summary rows are one-hot selectors at the detected scene-entry frames (\Fig{fig:proj_matrices}d), so the row storing the landmark at timestamp $\landmarkTime$ has $\projMat_{i,t} = 1$ for $t = \landmarkTime$ and $0$ otherwise, simply copying that stored key, $\projMat_i \keySym = \keySym[\landmarkTime]$, matching the frozen key of \Eq{eq:landmark}. The rows update as generation proceeds, with a new scene-entry frame overwriting a summary row and, once all rows are full, the oldest landmark evicted to make room for the newest.
These choices span a spectrum of $\projMat$-row structures, from sparse (one-hot selection) to dense (uniform averaging).

\newcommand{\pmatapp}[7]{%
  \begin{tikzpicture}[x=#3,y=#3,baseline=(current bounding box.center)]
    \draw[gray!30, line width=0.3pt]
      (0,0) grid [xstep=1,ystep=1] (#2,#1);
    \foreach \r/\c in {#5}{%
      \fill[#4!60] (\c,{#1-1-\r}) rectangle ++(1,1);}
    \draw[black, line width=0.8pt] (0,0) rectangle (#2,#1);
    \ifnum#6>0
      \node[font=\scriptsize,gray,rotate=90,anchor=center,overlay] at (-0.6,{#1/2}) {$\localAttnSize$ slots};
    \fi
    \node[font=\scriptsize,gray,below=3pt] at (#2/2,0) {$\numPastFrames$ frames};
    \ifnum#7>0
      \draw[decorate,decoration={brace,amplitude=5pt},gray,line width=0.6pt]
        ({#2+0.15},{#1}) -- ({#2+0.15},{#1-#7});
      \node[font=\scriptsize,gray,anchor=west] at ({#2+0.5},{#1-#7*0.5}) {$N_s$};
      \draw[decorate,decoration={brace,amplitude=5pt},gray,line width=0.6pt]
        ({#2+0.15},{#1-#7}) -- ({#2+0.15},0);
      \node[font=\scriptsize,gray,anchor=west] at ({#2+0.5},{(#1-#7)*0.5}) {$N_r$};
    \fi
  \end{tikzpicture}}

\begin{figure}[t!]
\centering
\resizebox{\linewidth}{!}{%
\begin{tabular}{@{}cccc@{}}
  \pmatapp{6}{10}{0.38cm}{teal}{0/4, 1/5, 2/6, 3/7, 4/8, 5/9}{4}{0}
  &
  \pmatapp{6}{10}{0.38cm}{teal}{0/0, 1/5, 2/6, 3/7, 4/8, 5/9}{0}{0}
  &
  \pmatapp{6}{10}{0.38cm}{blue}{0/0, 0/1, 1/2, 1/3, 2/4, 2/5, 3/6, 3/7, 4/8, 5/9}{0}{0}
  &
  \pmatapp{6}{10}{0.38cm}{orange}{0/1, 1/3, 2/4, 3/6, 4/8, 5/9}{0}{4}
  \\[5pt]
  \small\textbf{(a)} Sliding window & \small\textbf{(b)} + attention sink & \small\textbf{(c)} \wtfield & \small\textbf{(d)} \wtlandmark
\end{tabular}}
\caption{\textbf{Projection matrices $\projMat$ for the four cache structures}.
\textbf{(a)}~\emph{Sliding window}: last $\localAttnSize$ frames verbatim.
\textbf{(b)}~\emph{Sliding window with attention sink}: initial frame held as an attention sink, remaining slots verbatim recent.
\textbf{(c)}~\emph{\wtfield}: recent rows verbatim newest, summary slots each average a group of earlier frames.
\textbf{(d)}~\emph{\wtlandmark}: summary rows are one-hot at scene-entry frames, recent rows verbatim.}
\label{fig:proj_matrices}
\end{figure}

\subsection{Approximation Quality and Error Decomposition}
\label{app:proj_quality}

The cache contents, and hence $\projMat$, change as generation proceeds, since the recent window slides and summary rows are filled or overwritten. All statements in this and the following subsections therefore refer to one generation step at a time, where $\numPastFrames$ and $\projMat$ denote the history length and the projection realized by the cache at the current step, and $\querySet$ denotes the queries of the chunk being generated (one attention row per query token and head). Every bound holds at each step separately, and no single static $\projMat$ is claimed to serve all steps.

\paragraph{Two sources of error.}
The compressed cache differs from an unbounded cache in two independent ways: keys are \emph{repositioned} (re-encoded at slot-rank virtual positions rather than their original timestamps), and keys are \emph{compressed} (merged through $\projMat$). Writing $\attnOp_{\mathrm{orig}}$ for full attention with keys at their original positions, $\attnOp$ for full attention with keys re-encoded at virtual positions, and $\projAttnOp$ for the projected attention of \Eq{eq:hat_attn} on the re-encoded cache, the total error decomposes as:
\[
\|\projAttnOp - \attnOp_{\mathrm{orig}}\|
\;\le\;
\underbrace{\|\attnOp - \attnOp_{\mathrm{orig}}\|}_{\text{repositioning}}
\;+\;
\underbrace{\|\projAttnOp - \attnOp\|}_{\text{compression}} .
\]
The repositioning term is not an artifact of compression but the trade-off for addressability. Beyond the training horizon the original offsets are out-of-distribution and attention degrades outright (\Sec{sec:rope_ood}), so slot-rank re-encoding trades an uncontrolled OOD failure for a bounded, in-distribution position shift. We do not bound this term analytically, but it is measured empirically by the position ablation (\Tab{tab:pos_ablation}), which varies only the position assignment with the writer held fixed. \Prop{prop:approx_bound} below bounds the compression term. All statements in this section are relative to the repositioned reference $\attnOp$.

For a single query $\queryVec$, let $\protoAttnWeights = \softmaxOp(\queryVec\,(\projMat\keySym)^\top) \in \simplex{\localAttnSize}$ be the prototype attention weights under the compressed cache, and define the output error $\resid = \projAttnOp(\queryVec,\keySym,\valueSym;\,\projMat) - \attnOp(\queryVec,\keySym,\valueSym)$.

\begin{proposition}[Approximation bound]
\label{prop:approx_bound}
\[
  \|\resid\|_2 \;\leq\; \underbrace{\|\protoAttnWeights \projMat - \fullAttnWeights\|_1}_{\text{distribution mismatch}} \;\cdot\; \|\valueSym\|_{\infty,\mathrm{row}},
\]
where $\|\valueSym\|_{\infty,\mathrm{row}} = \max_t \|\valueSym[t]\|_2$.
\end{proposition}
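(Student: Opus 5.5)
The plan is to express both attention outputs as convex combinations of the same $\numPastFrames$ value rows, so that their difference becomes a single signed combination of value rows that can be bounded row by row.

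The first step rewrites the projected output from \Eq{eq:hat_attn}. By associativity,
\[
\projAttnOp(\queryVec,\keySym,\valueSym;\projMat) \;=\; \protoAttnWeights\,(\projMat\valueSym) \;=\; (\protoAttnWeights\projMat)\,\valueSym .
\]
The row vector $\beta_q := \protoAttnWeights\projMat \in \Reals^{1\times\numPastFrames}$ acts as an effective distribution over the $\numPastFrames$ past frames. Because $\projMat$ is row-stochastic and $\protoAttnWeights\in\simplex{\localAttnSize}$, this $\beta_q$ is nonnegative and its entries sum to one, so $\beta_q\in\simplex{\numPastFrames}$. This membership is not needed for the inequality itself, but it explains why the mismatch term is at most $2$ and hence why the bound is non-vacuous. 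The full output is $\attnOp = \fullAttnWeights\valueSym$, so subtracting gives
\[
\resid = (\beta_q - \fullAttnWeights)\,\valueSym = \sum_{t=1}^{\numPastFrames} \bigl(\beta_q[t]-\fullAttnWeights[t]\bigr)\,\valueSym[t].
\]

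The second step applies the triangle inequality in $\Reals^{d}$ to this sum:
\[
\|\resid\|_2 \le \sum_t |\beta_q[t]-\fullAttnWeights[t]|\,\|\valueSym[t]\|_2 \le \Bigl(\max_t\|\valueSym[t]\|_2\Bigr)\,\|\beta_q-\fullAttnWeights\|_1 .
\]
This is exactly the claimed bound. It is the $\ell_1$--$\ell_\infty$ Hölder step applied to the sum of rows.

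I do not expect a real obstacle. The only point that needs care is notational consistency. Both $\attnOp$ and $\projAttnOp$ should be evaluated on the same repositioned canonical cache, as stipulated just before the proposition, so that the same $\valueSym$ appears in both. The compressed values should be exactly $\projMat\valueSym$ with no extra normalization, which holds because the row-stochastic rows define both the compressed keys and the compressed values. The weights should be treated as row vectors, so that $\protoAttnWeights\projMat$ is well-typed.
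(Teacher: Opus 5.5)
Your proposal is correct and follows essentially the same route as the paper. The paper also regroups $\protoAttnWeights(\projMat\valueSym)$ as $(\protoAttnWeights\projMat)\valueSym$ by associativity, writes the error as a signed combination of value rows, and applies the vector-valued H\"{o}lder ($\ell_1$/$\ell_\infty$) bound. Your explicit triangle-inequality step is simply that H\"{o}lder step written out.
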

\begin{proof}
By associativity $\projAttnOp = \protoAttnWeights(\projMat\valueSym) = (\protoAttnWeights\projMat)\valueSym$, so $\resid = (\protoAttnWeights\projMat - \fullAttnWeights)\valueSym = \sum_t c_t\,\valueSym[t]$ with $c = \protoAttnWeights\projMat - \fullAttnWeights$. The vector-valued H\"{o}lder inequality ($\ell_1$/$\ell_\infty$ duality) on the rows of $\valueSym$ then gives $\|\resid\|_2 \leq \|c\|_1\,\|\valueSym\|_{\infty,\mathrm{row}}$.
\end{proof}

The bound motivates minimizing distribution mismatch $\|\protoAttnWeights \projMat - \fullAttnWeights\|_1$ independently of value geometry. For \wtfield, \Prop{thm:rdc} (stated later in \App{app:wtfield_details}) ensures the compressed key $\Kfield$ reproduces the mean attention score of its source keys at the shared virtual position, limiting per-slot mismatch when query preferences vary smoothly across a group. For \wtlandmark, a verbatim scene-entry key keeps the mismatch small for queries concentrated on that event, the recall condition of \Eq{eq:landmark_assumption} in \Sec{app:proj_optimality}.

\subsection{Reduction to Nonnegative Matrix Factorization}
\label{app:proj_nmf}

Nonnegative matrix factorization (NMF) approximates a nonnegative matrix as the product of two low-rank nonnegative factors~\citep{lee1999learning}, which form a compact summary of the original data. Compressing attention through $\projMat$ is an instance of this problem. Treat $\protoAttnWeights \in \simplex{\localAttnSize}$ as a free variable (decoupling it from the softmax in \Eq{eq:hat_attn}), and define $\stackedFull \in \Reals^{|\querySet|\times \numPastFrames}$ as the matrix whose rows are the full attention distributions $\fullAttnWeights$, one row per query in $\querySet$, and $\stackedProto \in \Reals^{|\querySet|\times \localAttnSize}$ as the matrix whose rows are the corresponding prototype weights $\protoAttnWeights$. The aggregate distribution-mismatch objective then becomes:
\[
  \min_{\stackedProto\geq 0,\;\projMat\geq 0} \|\stackedProto\projMat - \stackedFull\|_1 \quad\text{subject to row-sum constraints on }\stackedProto\text{ and }\projMat,
\]
which is NMF with an $\ell_1$ fidelity term, in the spirit of robust NMF formulations~\citep{kong2011robust}. This relaxation serves the analysis only, because at inference $\protoAttnWeights$ is not free but determined by $\projMat$ through the softmax, $\protoAttnWeights = \softmaxOp(\queryVec\,(\projMat\keySym)^\top)$ for each query $\queryVec$ and key matrix $\keySym$, and \Sec{app:proj_optimality} accounts for the gap through the lower-bound relation.

Each structural constraint on $\projMat$ then recovers a \worldtrace variant and a known clustering algorithm, spanning the sparse-to-dense spectrum. The uniform-partition rows of \wtfield are a fixed-assignment, $\ell_1$ (total-variation) analog of $k$-means with cluster-mean prototypes, with the time-ordered partition replacing the learned assignment step. The 0/1 rows of \wtlandmark, with $\protoAttnWeights$ constrained to hard assignment, reduce to $k$-medoids on attention distributions, for which the canonical-key cosine-distance rule is a lightweight single-pass $\bigO{1}$-per-chunk surrogate.

\subsection{When Are \worldtrace Projections Close to Optimal?}
\label{app:proj_optimality}
The NMF objective above is deliberately broad, since without assumptions on the query distribution no fixed $\localAttnSize$-row projection can be uniformly optimal.
Any reachable compressed distribution has the form $\gamma\projMat$, where the slot-weight vector $\gamma \in \simplex{\localAttnSize}$ mixes the $\localAttnSize$ rows of $\projMat$. Since each row is a distribution over the $\numPastFrames$ frames, the $\localAttnSize$ rows together place total weight $\localAttnSize$ across the $\numPastFrames$ columns.
Averaged over the columns, some frame $t$ therefore receives at most $\max_i \projMat_{i,t} \leq \localAttnSize/\numPastFrames$, so a query concentrating on that one frame has mismatch $\|\gamma\projMat - \basisVec{t}\|_1 = 2\bigl(1 - (\gamma\projMat)_t\bigr) \geq 2(1-\localAttnSize/\numPastFrames)$ for every $\gamma$. Intuitively, the compressed cache can place at most an $\localAttnSize/\numPastFrames$ fraction of its attention on any single frame, so it cannot reproduce a query that focuses on one frame, and the error grows as the compression ratio $\localAttnSize/\numPastFrames$ shrinks.
A compressed cache therefore needs to choose which parts of history to preserve. We define one target query family per variant, a definition rather than an assumption about the pretrained model, and show each variant near-optimal on its own family under the following relaxed mismatch objective, the total $\ell_1$ gap between each query's true attention $\fullAttnWeights$ and the closest distribution reachable from the $\localAttnSize$ compressed slots:
\[
  \mismatchObj = \sum_{\queryVec\in\querySet} \min_{\gamma_{\queryVec}\in\simplex{\localAttnSize}}\|\gamma_{\queryVec} \projMat-\fullAttnWeights\|_1 .
\]
Here $\gamma_{\queryVec}$ is chosen freely per query. At inference the slot weights are instead fixed by the softmax, $\protoAttnWeights = \softmaxOp(\queryVec\,(\projMat\keySym)^\top)$, so $\mismatchObj$ is a lower bound on the true inference objective, and the bounds we derive are for this relaxed version.

\paragraph{Recent-window mass.}
Split the attention as $\fullAttnWeights = (\fullAttnWeights^{\mathrm{old}}, \fullAttnWeights^{\mathrm{rec}})$, the weights on old and on recent frames. Both projections keep the $\numRecentSlots$ most recent frames as identity rows, which reproduce $\fullAttnWeights^{\mathrm{rec}}$ exactly, so compression error arises only on old history.
The conditions below therefore involve only the old-history attention, normalized to the distribution $\normOldWeights = \fullAttnWeights^{\mathrm{old}} / \|\fullAttnWeights^{\mathrm{old}}\|_1$.

\paragraph{Coherent queries: group-level attention.}
Partition the old history into $\numSummarySlots$ contiguous temporal groups, one per summary slot, and let $u_i$ be the uniform distribution over the frames in group $i$.
Call a query $\queryVec$ \emph{$\varepsilon_{\queryVec}$-coherent} if its old-history attention depends on groups rather than individual frames, meaning there exists a distribution $a_{\queryVec}\in\simplex{\numSummarySlots}$ such that:
\begin{equation}
  \Bigl\|\normOldWeights - \sum_i a_{\queryVec,i} u_i\Bigr\|_1 \leq \varepsilon_{\queryVec},
  \label{eq:field_assumption}
\end{equation}
which is the coherence condition.  For a query set $\querySet$ of $\varepsilon_{\queryVec}$-coherent queries, the \wtfield projection $\projMat_{\mathrm{field}}$ satisfies:
\begin{equation}
  \mismatchObj[\projMat_{\mathrm{field}}] \leq \sum_{\queryVec\in\querySet}\|\fullAttnWeights^{\mathrm{old}}\|_1\,\varepsilon_{\queryVec} \leq \sum_{\queryVec\in\querySet}\varepsilon_{\queryVec} .
  \label{eq:field_near_opt}
\end{equation}
Choosing $\gamma_{\queryVec}=\bigl(\|\fullAttnWeights^{\mathrm{old}}\|_1\, a_{\queryVec},\;\fullAttnWeights^{\mathrm{rec}}\bigr)$ copies $\fullAttnWeights^{\mathrm{rec}}$ exactly and leaves old-history error at most $\|\fullAttnWeights^{\mathrm{old}}\|_1\,\varepsilon_{\queryVec}$.  Since $\mismatchObj\geq0$ for any projection, \wtfield is within $\sum_{\queryVec}\varepsilon_{\queryVec}$ of the best possible projection for this query family.
The canonical averaging produces these uniform rows in a RoPE-invariant domain, keeping each group mean addressable at its virtual position.

\paragraph{Recall queries: sparse scene-entry attention.}
For episodic recall, a return query wants one specific past scene rather than a smooth summary. At most $\numSummarySlots$ landmarks are stored. Call a query $\queryVec$ \emph{$\delta_{\queryVec}$-covered} if the landmark frame it targets, written $m(\queryVec)$, satisfies:
\begin{equation}
  \|\normOldWeights - \basisVec{m(\queryVec)}\|_1 \leq \delta_{\queryVec} ,
  \label{eq:landmark_assumption}
\end{equation}
where $\basisVec{m(\queryVec)}$ is the one-hot distribution on that frame. Eqn.~\eqref{eq:landmark_assumption} is the \emph{recall condition}.  For a query set $\querySet$ of $\delta_{\queryVec}$-covered queries, the \wtlandmark projection $\projMat_{\mathrm{landmark}}$ satisfies:
\begin{equation}
  \mismatchObj[\projMat_{\mathrm{landmark}}] \leq \sum_{\queryVec\in\querySet}\|\fullAttnWeights^{\mathrm{old}}\|_1\,\delta_{\queryVec} \leq \sum_{\queryVec\in\querySet}\delta_{\queryVec} .
  \label{eq:landmark_near_opt}
\end{equation}
The construction is the same with $\basisVec{\slotIdx(\queryVec)}$ in place of $a_{\queryVec}$, where $\slotIdx(\queryVec)$ is the landmark slot whose row copies frame $m(\queryVec)$, so \wtlandmark is close to the best one-hot projection whenever the detector forms a $\delta$-cover of the recall targets.  The oldest-out scene-entry rule of our experiments is a streaming approximation to this cover, reliable when at most $\numSummarySlots$ scenes are simultaneously relevant.

\paragraph{What these bounds do and do not claim.}
\Eq{eq:field_near_opt} and \Eq{eq:landmark_near_opt} are conditional characterizations, true by construction for the query families that \Eq{eq:field_assumption} and \Eq{eq:landmark_assumption} define, not unconditional optimality claims.  We do not measure $\varepsilon_{\queryVec}$ or $\delta_{\queryVec}$ on the pretrained model, since that would require uncompressed rollouts past the training horizon, where \Sec{sec:rope_ood} shows the reference attention itself degrades.  Each condition is instead tested through the behavior it predicts.

\paragraph{Future directions.}
The NMF framing points to several natural extensions, including online $k$-means or streaming coresets~\citep{feldman2011unified} that adapt partition boundaries at inference time, soft-sparse rows that interpolate between averaging and verbatim selection, and a directly optimized $\projMat$ that balances coherence and recall across all queries in a rollout.

\section{Additional Properties}
\label{app:wtfield_details}

\paragraph{Temporal RoPE at long horizons.}
\label{app:rope_background}
We provide temporal-RoPE background for the addressability failure in \Sec{sec:rope_ood} and expand \Eq{eq:rdc}, summarized at the end of \Sec{subsec:wtfield}.
We analyze which temporal RoPE frequencies stay in-distribution at long horizons in MG2-1.3B, and which reach rotation angles the model never saw in training, making their contribution to the attention score effectively random.
MG2-1.3B inherits the 3D RoPE split of the Wan~2.1 backbone~\citep{wan2025}. The 128-dimensional per-head embedding is divided across three position axes: $2\numTempPairs{=}44$ dimensions for temporal position and $2\numSpatialPairsH{=}2\numSpatialPairsW{=}42$ dimensions each for spatial height and width, with shared base $\ropeBase = 10000$. The $\idxFreq$-th temporal frequency is $\ropefreq = \ropeBase^{-\idxFreq/\numTempPairs}$ for $\idxFreq = 0, \ldots, \numTempPairs{-}1$, ranging from $\ropefreq{=}1.0$ at $\idxFreq{=}0$ (fastest rotating) down to $\ropefreq \approx 1.5 \times 10^{-4}$ (slowest).

Frequency $\idxFreq$ enters the attention score through $\cos(\ropefreq |\relOffset|)$, where $\relOffset = \qpos - \kpos$ is the query-key offset. When $\ropefreq |\relOffset| \ll \pi$, the cosine stays near $1$, and the component is position-invariant and carries semantic content. When $\ropefreq |\relOffset| \gg \pi$, the phase wraps past $\pi$ many times, so the cosine takes an essentially arbitrary value at each offset and the component stops signaling how far away the key is.
The consequence is instability in the fastest components, where moving a key by a single offset step shifts the phase by up to $1$\,rad, so the contribution these components make to the score can change substantially, even in sign, between neighboring offsets, and recall degrades steadily over a long rollout.
This wavelength-vs-context view follows YaRN's NTK-by-parts framing~\citep{peng2023yarn}, which likewise groups RoPE dimensions by how their wavelength compares to the context length and treats the short-wavelength (fast) and long-wavelength (slow) groups differently.

Concretely, at the training max offset $\trainOffset = 5$, components $\idxFreq \geq 10$ satisfy $\ropefreq \times 5 < 0.08$\,rad and act as near-semantic carriers. At a representative long-horizon offset $|\relOffset| = 30$, the three fastest components ($\idxFreq \leq 2$) all exceed $3\pi$\,rad, with $\idxFreq{=}3$ at 8.5\,rad and $\idxFreq{=}5$ at 3.7\,rad. We show the phase $\ropefreq |\relOffset|$ across all frequencies and inference distances in \Fig{fig:ood_heatmap}, over a plotted distance range chosen for illustration only, since in an unbounded rollout the offset grows past it and pushes ever more components beyond $\pi$. The long-horizon failure is therefore driven by the fast components alone, while the slow components remain reliable semantic carriers even at large offsets. This motivates the position-side intervention of \worldtrace, which places every compressed slot at an in-distribution virtual offset and thereby keeps the fast components within their trained phase range at any horizon.

\begin{figure}[t!]
\centering
\includegraphics[width=\linewidth]{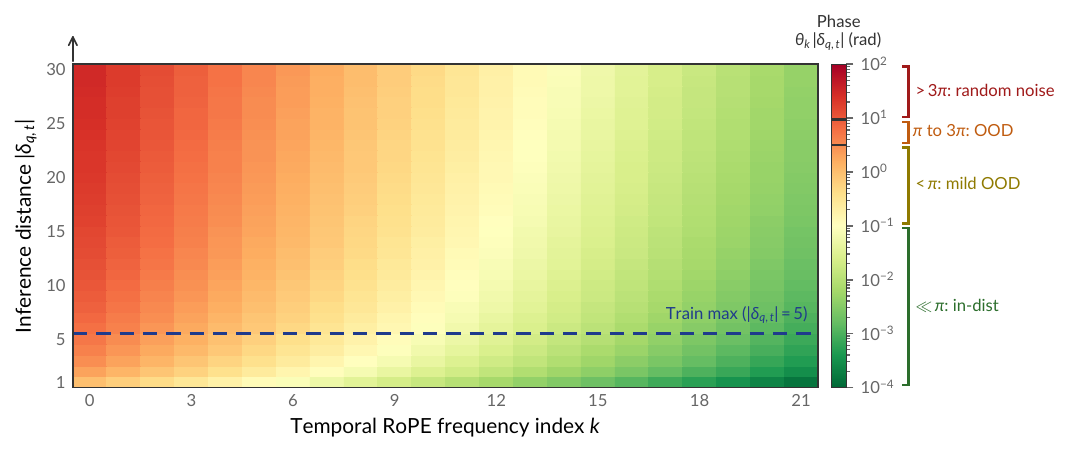}
\caption{\textbf{Fast RoPE components go out of distribution at long horizons.} Per-frequency RoPE phase $\ropefreq |\relOffset|$ on MG2-1.3B over frequency index $\idxFreq$ and inference distance $|\relOffset|$. The log color scale marks the severity thresholds, and the dashed line shows the training max $\trainOffset{=}5$. The plotted horizon is for illustration only, since $|\relOffset|$ keeps growing in an unbounded rollout and the phases only increase.}
\label{fig:ood_heatmap}
\end{figure}

\paragraph{Mean attention preservation.}
The \wtfield compression preserves the mean attention score, so that a single summary token $\Kfield$ reproduces exactly the average of the scores its $\numSourceFrames$ source keys would produce if all were relocated to the shared virtual position $\tv$, for every query and every temporal RoPE frequency. This matters because attention scores are the only channel through which the model reads the cache, so compression that silently changed them would bias the model toward or away from distant history, precisely the failure the pretrained attention never learned to correct.

\begin{proposition}[Mean attention preservation]
\label{thm:rdc}
Fix a temporal RoPE pair $\idxFreq$ and a virtual position $\tv$. For any query $\Qrope$,
\[
\bigl\langle \Qrope,\Kfield\bigr\rangle
=\frac{1}{\numSourceFrames}\sum\nolimits_{\idxSrc=1}^{\numSourceFrames}
\bigl\langle \Qrope,\Rot{\ropefreq \tv}\Kcx[\idxSrc]\bigr\rangle .
\]
\end{proposition}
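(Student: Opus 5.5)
The plan is to unfold \Def{def:rdc} and use nothing beyond linearity of the inner product. Here the canonical key is $\Kcx[\idxSrc] = \Rot{-\ropefreq \timeSym_{\idxSrc}}\,\Krot[\idxSrc]$, as in the notation table. Substituting this into \Eq{eq:rdc} rewrites the compressed key as
\[
\Kfield = \Rot{\ropefreq \tv}\,\frac{1}{\numSourceFrames}\sum\nolimits_{\idxSrc=1}^{\numSourceFrames}\Kcx[\idxSrc].
\]
In other words, it is the rotation of the canonical mean $\Kcxmean$ to the virtual position.

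The steps are as follows.
\begin{enumerate}
\item Pull the fixed rotation $\Rot{\ropefreq \tv}$ inside the finite sum. This uses only that it is a linear map on the two-dimensional pair. It gives $\Kfield = \frac{1}{\numSourceFrames}\sum_{\idxSrc}\Rot{\ropefreq \tv}\Kcx[\idxSrc]$.
\item Pair both sides with the query $\Qrope$. Bilinearity of the real inner product $\langle\cdot,\cdot\rangle$ on $\Reals^2$ moves the sum and the scalar $1/\numSourceFrames$ outside. This yields exactly the right-hand side of the statement.
\end{enumerate}
No property of $\Qrope$ is used, so the identity holds for every query, as claimed. Since the frequency pair $\idxFreq$ was arbitrary, summing over pairs gives the same preservation for the full temporal contribution to the attention logit.

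I expect no genuine obstacle. The only care needed is bookkeeping, in two places. First, the statement's right-hand side uses the canonical keys $\Kcx$, while \Eq{eq:rdc} is written with rotated keys $\Krot$. I must invoke the identity $\Rot{-\ropefreq \timeSym_{\idxSrc}}\Krot[\idxSrc] = \Kcx[\idxSrc]$ explicitly, rather than conflating the two. Second, in the complex-number view the correct pairing is $\realpart(\bar Q K)$, not the raw complex product. I would either stay in the $\Reals^2$ rotation-matrix picture throughout, or note that $\realpart$ is real-linear so the argument is unchanged.

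Optionally, I would add a remark contrasting this with the naive average $\Knaive$. There the keys carry differing rotations, so no single shared-position interpretation exists. This connects the result to \Rem{rem:mean_attn}, but it is not needed for the proof.
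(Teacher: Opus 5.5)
Your proposal is correct and follows the paper's own proof. Both rewrite $\Kfield$ as $\Rot{\ropefreq \tv}\Kcxmean$ via \Eq{eq:rdc}, then use linearity of the rotation and of the inner product in its second argument to pull the average outside. Your explicit use of $\Rot{-\ropefreq \timeSym_{\idxSrc}}\Krot[\idxSrc] = \Kcx[\idxSrc]$ is the step the paper leaves implicit.
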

\begin{proof}
By \Eq{eq:rdc}, $\Kfield = \Rot{\ropefreq \tv}\Kcxmean$ with $\Kcxmean = \tfrac{1}{\numSourceFrames}\sum_{\idxSrc=1}^{\numSourceFrames} \Kcx[\idxSrc]$. Since the rotation $\Rot{\ropefreq \tv}$ is linear and the inner product is linear in its second argument,
\[
\bigl\langle \Qrope,\Kfield\bigr\rangle
=\Bigl\langle \Qrope,\,\Rot{\ropefreq \tv}\Bigl(\frac{1}{\numSourceFrames}\sum\nolimits_{\idxSrc=1}^{\numSourceFrames}\Kcx[\idxSrc]\Bigr)\Bigr\rangle
=\frac{1}{\numSourceFrames}\sum\nolimits_{\idxSrc=1}^{\numSourceFrames}\bigl\langle \Qrope,\Rot{\ropefreq \tv}\Kcx[\idxSrc]\bigr\rangle.
\]
\end{proof}

Consequently, the compressed representation separates content from position by design, storing only the canonical content average $\Kcxmean$ in a slot whose temporal identity comes entirely from its virtual position $\tv$. This separation is exactly what slot-rank addressing needs, because the stored content carries no leftover phase from its source timestamps, letting \worldtrace place a slot at any virtual position (\Def{def:vpos}) where a query reads the same content.
\Prop{thm:rdc} also pins down this design by showing that any linear operator preserving the mean attention score for every query must output the same canonical average $\Kcxmean$, leaving only the virtual position $\tv$ free, which \worldtrace sets by slot rank (\Sec{subsec:worldtrace_positions}). Constraining an operator through the structure of the RoPE rotation it inverts has antecedents in structured-Jacobian parameterizations~\citep{lorraine2019jacnet}. One side effect of averaging is a norm reduction, since mutually uncorrelated canonical keys average to roughly $1/\sqrt{\numSourceFrames}$ times the norm of a single key, so a summary slot enters the softmax with a smaller logit than a verbatim key. We keep this behavior rather than restoring the norm with a $\sqrt{\numSourceFrames}$ rescale, both because the smaller logit softly downweights heavily compressed history relative to the verbatim recent window and because the rescale did not help at long horizons.

\section{Related Work}
\label{app:related}

\subsection{Memory-Augmented Transformers}
\label{app:related_memory_aug}

Memory-augmented transformers pair a recent verbatim window with a compressed record of older context, realized as segment-level recurrence with relative positions~\citep{dai2019transformerxl}, a compressed memory of older tokens~\citep{rae2020compressive}, external $k$NN memory~\citep{wu2022memorizing}, learned memory tokens passed between segments~\citep{bulatov2022rmt,hutchins2022block}, and a fused sliding window plus compressive memory~\citep{munkhdalai2024infini}. \worldtrace adopts the same two-tier structure, with a recent verbatim window plus compressed summary slots.
Landmark Attention~\citep{mohtashami2023landmark} shares vocabulary with \wtlandmark, but its landmarks are \emph{trained} representatives that gate attention to off-cache blocks at 1D position, whereas \wtlandmark is training-free, stores verbatim canonical-frame keys inside the $O(1)$ cache, and freezes them against unrotate-and-rerotate drift in 3D RoPE. The broader spectrum toward compressed state spans linear attention~\citep{katharopoulos2020linear}, Mamba~\citep{gu2023mamba}, and Test-Time Training~\citep{sun2024ttt}. We target the fixed-budget, training-free setting in AR video world models.

\subsection{KV Cache Compression for Large Language Models}
\label{app:related_kv_llm}

LLM KV-cache methods fall into three families: \emph{window/eviction}~\citep{beltagy2020longformer,xiao2024streamingllm,zhang2023h2o,li2024snapkv,cai2025pyramidkv}, \emph{merging}~\citep{bolya2023tome,keepkv2026}, and \emph{quantization}~\citep{liu2024kivi}, the last orthogonal to the position-OOD failure we target. DuoAttention~\citep{xiao2025duo} splits heads into retrieval and streaming families, parallel to our recent-window and summary-slot split. These methods change \emph{content} only, picking which tokens to keep or merge rather than which virtual position a summary spanning several merged frames should occupy. The closest exceptions touch per-token positional structure~\citep{elitekv2025,a2ats2025} or unrotate and rerotate individually retained keys~\citep{corallo2024finch,kvpress2024}, but none reassign summary-slot positions. Our setting also differs from LLM compression in that keys accumulate across independently denoised AR video chunks, so importance must be ranked under denoising-timestep-conditioned queries. Under such queries, angular-concentration scores~\citep{mao2026triattention} lose the stable pre-RoPE key geometry they assume, and layer-budget reallocations~\citep{cai2025pyramidkv,zhang2024simlayerkv} tune which layers keep tokens rather than how summary slots are addressed at read time.
\wtfield instead averages canonical keys and pairs the result with slot-rank positions.

\subsection{Position Extrapolation for {RoPE}}
\label{app:related_rope}

RoPE~\citep{su2024rope} degrades past trained relative offsets. 
Existing fixes span index rescaling~\citep{chen2023posint,peng2023yarn,shang2025longrope2}, alternative distance biases~\citep{press2022alibi,sun2023xpos}, position-free training~\citep{kazemnejad2023nope}, and frequency- or content-aware encodings~\citep{golovneva2024cope,hua2025fope}, while vision and video extensions add 2D/3D splits~\citep{heo2024ropevit,wei2025videorope}, camera-relative encodings~\citep{li2026cameras}, and training-free diffusion extrapolation~\citep{zhao2025riflex}. All of this remaps positions for individually retained tokens or fixed-length inputs. Our cache instead needs to be \emph{compressed}, since a fixed $O(1)$ budget cannot keep every past frame verbatim. Each merged summary then needs an explicit virtual position, a quantity that does not arise when tokens keep separate identities, and \worldtrace assigns slot-rank offsets for that role without retraining.
\citet{wang2025anchor} tie position to numerical precision, consistent with our coupling thesis (\Sec{subsec:joint_design}). UCM~\citep{xu2026ucm} and MosaicMem~\citep{yu2026mosaicmem} warp 3D positions for camera-controlled worlds as geometry-driven, trained operators.

\subsection{Autoregressive Video World Models}
\label{app:related_video_wm}

Interactive AR video world models~\citep{genie3_2025,matrixgame2025,oasis2024,valevski2025gamengen} generate via chunk-wise diffusion prediction, dominantly under the Self-Forcing paradigm~\citep{selfforcing2025,chen2024diffusionforcing,song2025dfot,yin2025causvid}, which aligns training rollouts with the model's own KV cache and inherits a local attention window not sized for arbitrarily long inference. Training-time horizon extensions~\citep{cui2026selfforcingpp,huang2026vid2world} are covered with memory-specific methods in \Sec{app:related_video_memory}. Foundation backbones~\citep{wan2025} and the world-model lineage from \citet{ha2018world} through Dreamer~\citep{hafner2025dreamerv3} to closed-loop driving simulators~\citep{basant2026nvidia} motivate interactive rollouts beyond passive long-form generation.

\subsection{Memory and KV Cache for Video World Models}
\label{app:related_video_memory}

\paragraph{Inference-time KV cache.}
Closest to \worldtrace, inference-time methods modify cache read/write without retraining. \emph{Position-side} fixes remap or cap temporal offsets via block-relative RoPE with aggressive flushing~\citep{infinityrope2026}, inference-time position adjustment~\citep{cui2026lol,li2026flex}, or per-token virtual positions with cross-frame decay~\citep{li2026packcache}. \emph{Content-side} writers decide what to merge, prune, or retain, through re-aligned sink tokens~\citep{yi2025deepforcing}, dual-rate EMA summaries~\citep{kim2026memrope}, importance pruning~\citep{chen2026pafukv,li2026rollingsink}, and temporal-correspondence merging~\citep{samuel2026tempcache}. FlowCache~\citep{ma2026flowcache} targets per-step compute rather than long-horizon recall, but its recent-summary structure parallels ours. These methods adjust the position side, the content side, or pair the two without designing them jointly. To our knowledge, none combines a position assignment that keeps an arbitrary number of compressed slots distinct at any horizon with summary writes performed in the canonical pre-RoPE domain. \worldtrace couples exactly these two (\wtfield compresses, \wtlandmark freezes), keeping every slot addressable inside a fixed $O(1)$ cache.

\paragraph{Training-time extensions.}
Training-time methods finetune the generator itself for longer effective context. One line pairs partitioned caches or memory hierarchies with RoPE re-alignment or long-context teacher supervision~\citep{chen2026groundedforcing,yang2026anchorforcing,chen2026contextforcing,li2026hybridforcing,li2025svi,zhao2026relaxforcing,liu2026rollforce}. Memory-decomposition approaches assign different parts of history to dedicated modules~\citep{stapf2026come,wu2025spmem,cai2026moc}.
Architectural long-horizon models train for extended rollouts rather than patching a fixed checkpoint~\citep{gu2025far,oshima2025worldpack,sun2025worldplay,yu2025videossm,zhu2026sana}, and Sparse Forcing~\citep{xu2026sparseforcing} learns block-sparse attention with persistent anchors end-to-end. \citet{po2025long} extend the effective context of a video world model with a trained state-space memory and evaluate it on Memory Maze with a spatial retrieval task, where the model is given a trajectory as context and then backtracks the exact action sequence to the starting position. \worldtrace needs no retraining, restoring addressability by changing cache read/write on a fixed checkpoint.

\paragraph{Geometry-conditioned memory.}
Trained methods decide what to keep in memory based on camera pose, actions, or scene geometry, drawing on
pose- and action-annotated compressed latents~\citep{hong2025relic}, warped positions or latents for camera-controlled worlds~\citep{yu2026mosaicmem,xu2026ucm,mao2026packforcing}, co-visibility-based selection~\citep{gao2026memcam}, pose-tagged frame re-injection~\citep{xiao2025worldmem}, and camera-aware memory gating~\citep{guo2026memorizewhenneeded}. \worldtrace is geometry-free, replacing pose-, action-, or co-visibility-driven memory selection with slot-rank positions and canonical cache writes.

\paragraph{External memory banks and retrieval.}
Rather than compressing inside the KV cache, these systems store or retrieve frames outside it and re-inject them via auxiliary attention or re-encoding, through FOV-based frame selection~\citep{matrixgame3_2026}, importance-weighted frame contexts~\citep{zhang2025framepack}, first-chunk appearance anchoring~\citep{henschel2025streamingt2v}, cached latent banks~\citep{wu2025corgi,li2025vmem,yu2025contextmemory}, and global-state retrieval~\citep{chen2025vrag}. \worldtrace keeps memory in the KV cache under one $O(1)$ recent-plus-summary budget, with no external bank, re-encoding, or retrieval.

\paragraph{Comparison.}
In \Tab{tab:method_comparison}, we qualitatively compare memory approaches on four axes: whether positional OOD is fixed, whether memory is bounded, whether intermediate history is preserved, and whether retraining is required. For evaluation, WorldScore~\citep{duan2025worldscore}, MIND~\citep{ye2026mind}, and VBench-2.0~\citep{zheng2025vbench2} assess long-horizon quality, and \citet{lian2025memorybench} targets revisit consistency in Minecraft. The Memory Maze retrieval task of \citet{po2025long} is the closest published revisit test to ours and scores a backtracked trajectory against the recorded ground-truth frames, whereas \loopbench sweeps loop topology, rollout length, camera orientation, and multi-revisit depth and scores each return against the model's own earlier generation at matched poses, so it needs no ground-truth footage of the revisited scene.

\begin{table}[t!]
\centering\small
\caption{\textbf{Memory approach comparison.} Representative baselines and inference-time cache methods on four axes.}
\label{tab:method_comparison}
\setlength{\tabcolsep}{4pt}
\resizebox{\columnwidth}{!}{%
\begin{tabular}{lcccc}
\toprule
Method & Fixes pos.\ OOD? & Bounded memory? & Preserves history? & Training-free? \\
\midrule
Full KV & No & No ($O(\numChunks)$) & Yes (exact) & Yes \\
Sliding window & N/A (no long-range reads) & Yes ($O(1)$) & Recent only & Yes \\
KV Flush $+$ Block-relative~\citep{infinityrope2026} & Yes & Yes ($O(1)$) & No  & Yes \\
Naive avg.\ $+$ Block-relative & Partial (slots collapse) & Yes ($O(1)$) & No (corrupted)  & Yes \\
MemRoPE~\citep{kim2026memrope} & Yes  & Yes ($O(1)$) & EMA-diluted & Yes \\
Deep Forcing~\citep{yi2025deepforcing} & Partial & Yes ($O(1)$) & Sink plus pruned recent & Yes \\
\textbf{\wtfield (ours)} & \textbf{Yes} & \textbf{Yes ($O(1)$)} & \textbf{Yes (compressed field)} & \textbf{Yes} \\
\textbf{\wtlandmark (ours)} & \textbf{Yes} & \textbf{Yes ($O(1)$)} & \textbf{Yes (verbatim landmarks)} & \textbf{Yes} \\
\bottomrule
\end{tabular}%
}
\end{table}

\section{Additional Experimental Results}
\label{app:extra_results}

\subsection{Unified Cache Allocation}
\label{app:allocation}
\Sec{sec:coherence} and \Sec{sec:loop} evaluate the two content writers separately, while a single long rollout typically asks for both coherent continuation and reliable revisits. The slot-rank cache of \Def{def:vpos} is a shared interface, so the two writers can occupy the same cache and the only decision is how to divide the fixed slot budget between them. Every summary slot that does not hold a landmark stays a \wtfield average.

\begin{table}[t]
\centering
\caption{\textbf{Slot allocation between \wtfield and \wtlandmark at a fixed cache budget.} All rows use $\numSummarySlots{=}4$ summary slots and $\numRecentSlots{=}2$ recent slots on ABA revisits at a fixed seed. F counts slots written by canonical key averaging and L counts slots holding verbatim landmarks. Recall rises with the landmark share while the remaining \wtfield slots hold coherence at or above the sliding-window baseline.}
\label{tab:allocation}
\small
\setlength{\tabcolsep}{5pt}
\begin{tabular}{lcccccc}
\toprule
 & \multicolumn{3}{c}{$\TempSSIM$ $\uparrow$} & \multicolumn{3}{c}{$\PAC$ $\uparrow$} \\
\cmidrule(lr){2-4}\cmidrule(lr){5-7}
Allocation & $\numChunks{=}16$ & $\numChunks{=}32$ & $\numChunks{=}48$ & $\numChunks{=}16$ & $\numChunks{=}32$ & $\numChunks{=}48$ \\
\midrule
Sliding window & 0.754 & 0.778 & 0.776 & 0.837 & 0.815 & 0.792 \\
\midrule
4F$+$0L (\wtfield) & 0.762 & \textbf{0.783} & 0.767 & 0.828 & 0.808 & 0.804 \\
3F$+$1L & 0.765 & 0.770 & 0.769 & 0.829 & 0.803 & 0.799 \\
2F$+$2L & 0.766 & 0.762 & 0.770 & 0.843 & 0.793 & 0.796 \\
1F$+$3L & \textbf{0.767} & 0.780 & 0.768 & 0.841 & 0.809 & 0.780 \\
0F$+$4L (\wtlandmark) & 0.760 & 0.770 & \textbf{0.797} & \textbf{0.866} & \textbf{0.827} & \textbf{0.817} \\
\bottomrule
\end{tabular}
\end{table}

We sweep the full allocation from all-\wtfield to all-\wtlandmark at the fixed budget of $\numSummarySlots{=}4$ summary slots and report both metrics in \Tab{tab:allocation}. The all-landmark end of the sweep is the strongest on recall at every horizon and reaches $\PAC{=}0.866$, $0.827$, and $0.817$ at $\numChunks{=}16$, $32$, and $48$, above the sliding-window baseline in each case. The intermediate allocations do not interpolate monotonically between the two ends, and at $\numChunks{=}32$ and $\numChunks{=}48$ several of them sit slightly below the baseline, so on these runs the gain comes from dedicating slots to landmarks rather than from any particular mixture. The practical conclusion is that one cache serves both workloads with no retraining and no additional memory, and that the split follows from which behavior a given rollout needs rather than from tuning.

\subsection{Training-Free Positional Baselines}
\label{app:memrope}

We compare MemRoPE~\citep{kim2026memrope} and YaRN~\citep{peng2023yarn} on the \Sec{sec:loop} ABA loops in \Tab{tab:memrope}, factoring the position scheme (Block-relative vs.\ \worldtrace) from the content writer at two horizons. 

\textbf{MemRoPE}~\citep{kim2026memrope} (Block-relative $+$ dual-rate EMA with $\alpha_{\mathrm{long}}{=}0.01$ and $\alpha_{\mathrm{short}}{=}0.1$, matching the original release) survives OOD positions via EMA decay but is still beaten by Landmark$+$Block-relative, because verbatim canonical keys retain stronger query-key similarity than smoothed averages, and \wtlandmark adds in-distribution positions on top. MemRoPE$+$\worldtrace (Block-relative swapped for \worldtrace, EMA kept) degrades ($p{<}0.001$) because its write and read offsets disagree.

\textbf{YaRN}~\citep{peng2023yarn} rescales temporal RoPE frequencies to bring offsets back in-distribution, gaining $+22\%$ over the sliding-window baseline at $\numChunks{=}32$ ($p{<}0.001$) and confirming position is the primary bottleneck. However, it needs $O(\numChunks)$ memory (OOM by ${\sim}\numChunks{=}100$), and the rescale diverges with horizon. \wtlandmark exceeds it by a wide margin at $\numChunks{=}32$ ($0.964$ vs.\ $0.490$) in $O(1)$ memory and stays near-constant through $\numChunks{=}512$.

\begin{table}[!tbp]
\centering
\caption{\textbf{Concurrent training-free baselines.} $\PAC$ ($\uparrow$) at two horizons (ABA). YaRN~\citep{peng2023yarn} keeps an $O(\numChunks)$ cache and improves over the sliding window at $\numChunks{=}32$. MemRoPE$+$\worldtrace replaces Block-relative positions with \worldtrace positions while leaving the dual-rate EMA content unchanged.}
\label{tab:memrope}
\setlength{\tabcolsep}{5pt}
\resizebox{\columnwidth}{!}{%
\begin{tabular}{lcccc}
\toprule
Method & Position & Content & $\numChunks{=}32$ ($16{\times}$) & $\numChunks{=}48$ ($24{\times}$) \\
\midrule
\multicolumn{5}{l}{\footnotesize\textit{$O(\numChunks)$ cache:}} \\
Sliding window (baseline)                         & actual      & Sliding window & 0.401 & 0.388 \\
YaRN~\citep{peng2023yarn}                        & NTK         & Sliding window & 0.490 & 0.412    \\
\midrule
\multicolumn{5}{l}{\footnotesize\textit{$O(1)$ cache, Block-relative positions:}} \\
MemRoPE~\citep{kim2026memrope}                        & Block-relative   & dual EMA & 0.651 & 0.706 \\
Landmark $+$ Block-relative                                & Block-relative   & verbatim & 0.929 & 0.934 \\
\midrule
\multicolumn{5}{l}{\footnotesize\textit{$O(1)$ cache, \worldtrace positions:}} \\
MemRoPE $+$ \worldtrace positions                     & \worldtrace & dual EMA & 0.592 & 0.662 \\
\wtlandmark \textbf{(ours)}                           & \worldtrace & verbatim & \textbf{0.964} & \textbf{0.972} \\
\bottomrule
\end{tabular}%
}
\end{table}

\subsection{Scene Consistency Episodic Recall Sweep}
\label{app:pac_sweep}

We report the full $\PAC$ breakdown across $\numChunks \in \{16,32,48,64,128,256\}$ on ABA loops ($\numSummarySlots{=}4$, $\numRecentSlots{=}2$) in \Tab{tab:pac} and partition methods by retained past content into three tiers: \emph{compression-only} (sliding window, Naive, \wtfield), \emph{canonical-K anchoring} (Latent re-anchor, \wtfield with scene anchoring), and \emph{verbatim recall} (\wtlandmark). Within compression, \wtfield beats the sliding-window baseline at $\numChunks{=}32$ to $64$ ($p{<}0.001$, paired $t$-test). Anchoring lifts recall by $40$ to $50$ points at moderate horizons. Only verbatim recall sustains it at $\numChunks{=}256$. At $\numChunks{=}256$ the canonical-K tier's $\PAC$ falls to $0.610$ while \wtlandmark holds at $0.989$.

\begin{table}[t]
\centering
\caption{\textbf{Episodic recall splits into three tiers. Only verbatim recall scales.} ABA loops, $\PAC$ ($\uparrow$). Bold = best per column.}
\label{tab:pac}
\small
\setlength{\tabcolsep}{4pt}
\begin{tabular}{lcccccc}
\toprule
Method & $\numChunks{=}16$ & $\numChunks{=}32$ & $\numChunks{=}48$ & $\numChunks{=}64$ & $\numChunks{=}128$ & $\numChunks{=}256$ \\
\midrule
\multicolumn{7}{l}{\footnotesize\textit{Compression only:}} \\
Sliding window (baseline)                   & 0.540 & 0.401 & 0.388 & 0.412 & 0.504 & 0.631 \\
Naive $+$ Block-relative               & 0.570 & 0.434 & 0.433 & 0.443 & 0.486 & 0.598 \\
\textbf{\wtfield (ours)}          & 0.555 & 0.434 & 0.436 & 0.442 & 0.495 & 0.602 \\
\midrule
\multicolumn{7}{l}{\footnotesize\textit{Canonical-K anchoring:}} \\
Latent re-anchor                  & 0.955 & 0.952 & 0.938 & 0.913 & 0.782 & 0.610 \\
\wtfield with scene anchoring     & 0.955 & 0.951 & 0.935 & 0.911 & 0.777 & 0.624 \\
\midrule
\multicolumn{7}{l}{\footnotesize\textit{Verbatim recall:}} \\
\textbf{\wtlandmark (ours)}       & \textbf{0.959} & \textbf{0.964} & \textbf{0.972} & \textbf{0.976} & \textbf{0.986} & \textbf{0.989} \\
\bottomrule
\end{tabular}
\end{table}

\subsection{Cross-Architecture Experiments}
\label{app:lingbot}

In \Tab{tab:lingbot_pac}, we report $\PAC$ at $2{\times}$ to $8{\times}$ training horizon on LingBot-World~\citep{lingbot-world} with \worldtrace applied. LingBot-World is a mixture-of-experts diffusion transformer built on a 14B backbone with Pl\"{u}cker camera conditioning, and MG2 is a dense 1.3B model without it, so this comparison varies both scale and architecture. The recall gain that \wtlandmark delivers on both backbones indicates that the positional bottleneck we identify follows the trained offset range rather than parameter count. \wtfield tracks the sliding-window baseline closely, above it at $2{\times}$, $6{\times}$, and $8{\times}$ and $0.005$ below it at $4{\times}$, since Pl\"{u}cker camera conditioning already supplies the recall signal that KV-cache content provides on MG2, leaving no room for canonical averaging to add. \wtlandmark improves over sliding-window retention from $4{\times}$ onward ($+8.9\%$, $+14.1\%$, $+7.3\%$ at $4{\times}$/$6{\times}$/$8{\times}$), with no significant gain at $2{\times}$ ($-0.006$ absolute, $p{>}0.1$), since verbatim key injection provides a complementary recall signal that strengthens as camera-pose priors alone become insufficient at longer horizons.

\begin{table}[H]
\centering
\caption{\textbf{LingBot-World~\citep{lingbot-world} $\PAC$ at extended horizons.} \wtlandmark improves over the sliding-window baseline from $4{\times}$ onward. \wtfield stays close to the sliding window, consistent with Pl\"{u}cker conditioning supplying the primary recall signal.}
\label{tab:lingbot_pac}
\small
\setlength{\tabcolsep}{5pt}
\begin{tabular}{ccccc}
\toprule
$\numChunks$ & Horizon & Sliding window & \wtfield & \wtlandmark \\
\midrule
14 & $2{\times}$ & 0.657 & \textbf{0.668} & 0.651 \\
28 & $4{\times}$ & 0.624 & 0.619 & \textbf{0.680} \\
42 & $6{\times}$ & 0.591 & 0.620 & \textbf{0.674} \\
56 & $8{\times}$ & 0.632 & 0.648 & \textbf{0.678} \\
\bottomrule
\end{tabular}
\end{table}

\subsection{Slot Allocation}
\label{app:slot_sensitivity}
\label{par:nswr}

\paragraph{Summary/recent split sensitivity.} The main experiments use $\numSummarySlots{=}4$ summary slots and $\numRecentSlots{=}2$ recent-window slots (total $\numSummarySlots{+}\numRecentSlots{=}6$, matching the training context size). To test sensitivity to this split, we swept five allocations ($\numSummarySlots{+}\numRecentSlots{=}6$, ABA loops at $\numChunks{=}32$ and $\numChunks{=}64$). Results are in \Tab{tab:slot_sensitivity}. $\numSummarySlots{\leq}2$ collapses toward the sliding-window baseline, since with one slot the B$\to$A detector overwrites the scene-A landmark. With two slots the B-side traversal evicts it before the return. $\numSummarySlots{=}3$ is intermediate ($0.652$/$0.693$ at $\numChunks{=}32$/$64$) but still evicts the landmark. $\numSummarySlots{=}4$ retains the landmark through the B-side return. $\numSummarySlots{=}5$ adds only $+0.009$/$+0.005$, confirming a four-slot plateau.

\begin{table}[h]
\centering
\caption{\textbf{Slot Allocation.} $\PAC$ at $\numChunks{=}32$ and $64$ as a function of $\numSummarySlots$ (total budget fixed at $\numSummarySlots{+}\numRecentSlots{=}6$). $\numSummarySlots{\leq}2$ collapses toward the sliding-window baseline. Four slots is the critical boundary for retaining the scene-A landmark through the B-side traversal.}
\label{tab:slot_sensitivity}
\small
\begin{tabular}{cccc}
\toprule
$\numSummarySlots$ & $\numRecentSlots$ & $\PAC$ $\numChunks{=}32$ & $\PAC$ $\numChunks{=}64$ \\
\midrule
\multicolumn{2}{l}{Sliding window} & 0.401 & 0.412 \\
\midrule
1 & 5 & 0.413 & 0.437 \\
2 & 4 & 0.419 & 0.426 \\
3 & 3 & 0.652 & 0.693 \\
4 & 2 & 0.964 & 0.976 \\
5 & 1 & 0.973 & 0.981 \\
\bottomrule
\end{tabular}
\end{table}

\subsection{Memory Accounting}
\label{app:memory_accounting}

\worldtrace never holds a full-attention KV cache. At every step the model attends over the same $\numSummarySlots{+}\numRecentSlots$ entries, so attention cost is constant per step and peak GPU memory is nearly flat in the rollout length. What a summary slot stores does evolve as generation proceeds, since each chunk that leaves the recent window updates the slot it belongs to, while the query position affects only the in-distribution slot-rank offset at which the slot is read through a single $O(1)$ rotation.

We report peak GPU memory at $\numChunks{=}64$, $256$, and $512$ in \Tab{tab:memory}. \wtfield matches the sliding-window baseline exactly at all three horizons because it adds nothing to the GPU-resident cache. Its recompute writer instead offloads the canonical key of each frame that leaves the recent window to host memory, which grows by $5.4$\,MB per generated latent frame and stays off the GPU. \wtlandmark keeps no history at all beyond the frozen scene-entry keys in its slots and adds a constant scene-anchor overhead of about $0.6$\,GB over the baseline at every horizon. 

\begin{table}[t]
\centering
\caption{\textbf{Peak GPU memory and writer-side state.} Measured on one A100 80\,GB at batch size~1. \wtfield matches the sliding-window baseline at every horizon and keeps its evicted canonical keys in host memory, and \wtlandmark adds a constant scene-anchor cost that does not grow with $\numChunks$.}
\label{tab:memory}
\small
\setlength{\tabcolsep}{5pt}
\begin{tabular}{lcccl}
\toprule
 & \multicolumn{3}{c}{Peak GPU memory (MB)} & \\
\cmidrule(lr){2-4}
Method & $\numChunks{=}64$ & $\numChunks{=}256$ & $\numChunks{=}512$ & State outside the cache \\
\midrule
Sliding window & 1838 & 1903 & 1993 & none \\
\wtfield & 1838 & 1903 & 1993 & host memory, $+5.4$\,MB per latent frame \\
\wtlandmark & 2470 & 2536 & 2626 & none \\
\bottomrule
\end{tabular}
\end{table}

\subsection{Streaming Summary Writer}
\label{app:streaming_writer}

The content writer of \wtfield is a \emph{recompute} writer, which retains evicted canonical keys and recomputes each slot mean when group boundaries move. A strictly streaming alternative keeps one running sum and count per slot instead. Each evicted frame is folded into its slot once and then discarded, and a boundary shift is handled by an exact merge of adjacent buckets, so the writer state is $O(\numSummarySlots)$ and constant in rollout length.

We compare the two writers at $\numChunks{=}32$ and $\numChunks{=}48$ in \Tab{tab:streaming_writer}. The streaming writer keeps most of the recompute writer's coherence gain and reaches the lowest Scene Drift at $\numChunks{=}32$, and both writers stay above the sliding-window baseline at both horizons. Since the streaming writer retains no history outside the cache, this comparison isolates the effect of the slot-rank positions of \Def{def:vpos} from any benefit of the extra statistics the recompute writer happens to keep.

\begin{table}[t]
\centering
\caption{\textbf{Recompute and streaming \wtfield writers.} The streaming writer folds each evicted frame into a running per-slot accumulator once and then discards it, so its state is constant in rollout length. It keeps most of the coherence gain of the recompute writer and both stay above the sliding-window baseline.}
\label{tab:streaming_writer}
\small
\setlength{\tabcolsep}{5pt}
\begin{tabular}{lcccc}
\toprule
 & \multicolumn{2}{c}{$\numChunks{=}32$} & \multicolumn{2}{c}{$\numChunks{=}48$} \\
\cmidrule(lr){2-3}\cmidrule(lr){4-5}
Method & $\TempSSIM$ $\uparrow$ & Scene Drift $\downarrow$ & $\TempSSIM$ $\uparrow$ & Scene Drift $\downarrow$ \\
\midrule
Sliding window & 0.571 & 0.0229 & 0.472 & 0.0305 \\
\wtfield (recompute) & \textbf{0.613} & 0.0250 & \textbf{0.545} & \textbf{0.0295} \\
\wtfield (streaming) & 0.606 & \textbf{0.0223} & 0.538 & 0.0298 \\
\bottomrule
\end{tabular}
\end{table}

\section{\loopbench: Loop Memory Benchmark}
\label{app:loopbench}

\begin{figure}[htbp]
\centering
\definecolor{loopboxblue}{HTML}{A9B8E0}
\definecolor{looptabblue}{HTML}{CBD7F2}
\begin{tikzpicture}[>=Stealth, thick, font=\scriptsize\rmfamily,
  nd/.style={draw, circle, fill=gray!20, minimum size=0.50cm, inner sep=0pt},
  loopout/.style={->, blue!70},
  loopret/.style={->, blue!70, dashed},
  loopoutlight/.style={->, blue!55},
  loopretlight/.style={->, blue!55, dotted},
  loopskip/.style={->, blue!55, densely dotted},
  loopoutthick/.style={->, blue!70, thick},
  loopretthick/.style={->, blue!70, dashed, thick},
  rowbox/.style={draw=loopboxblue, line width=1.2pt, rounded corners=10pt},
  rowtab/.style={fill=looptabblue, rounded corners=3.5pt, inner xsep=8pt,
                 inner ysep=3.5pt, font=\footnotesize\rmfamily, text=black}]

\def\DX{4.7}   %
\def\BoxL{-0.90}   %
\def\BoxR{12.45}   %
\def\BoxB{-1.30}   %
\def\TabX{5.775}   %
\def\RowIIY{-5.30}    %
\def\RowIIIY{-9.50}   %
\def\RowIVY{-14.55}   %

\begin{scope}[yshift=0cm]
  \draw[rowbox] (\BoxL,\BoxB) rectangle (\BoxR, 2.90);
  \node[rowtab] at (\TabX, 2.90) {Row~1:\enspace varying topology ($\loopK$ intermediate waypoints)};
\end{scope}
\begin{scope}[yshift=\RowIIY cm]
  \draw[rowbox] (\BoxL,\BoxB) rectangle (\BoxR, 3.15);
  \node[rowtab] at (\TabX, 3.15) {Row~2:\enspace varying rollout length ($\numChunks$ AR chunks in total, ABA topology)};
\end{scope}
\begin{scope}[yshift=\RowIIIY cm]
  \draw[rowbox] (\BoxL,\BoxB) rectangle (\BoxR, 2.05);
  \node[rowtab] at (\TabX, 2.05) {Row~3:\enspace camera-orientation pans (solid = pan away, dashed = return)};
\end{scope}
\begin{scope}[yshift=\RowIVY cm]
  \draw[rowbox] (\BoxL,\BoxB) rectangle (\BoxR, 2.90);
  \node[rowtab] at (\TabX, 2.90) {Row~4:\enspace multi-revisit patterns ($\loopR > 1$, solid/lighter arcs = 1st/2nd traversal)};
\end{scope}

\begin{scope}[xshift=0cm, yshift=0cm]
  \node[nd] (a) at (0.8,0)   {A};
  \node[nd] (b) at (0.8,2.1) {B};
  \draw[loopout, bend left=24] (a) to node[right]{\scriptsize $\times\!8$} (b);
  \draw[loopret, bend left=24] (b) to (a);
  \node[anchor=north, font=\scriptsize\rmfamily] at (0.8,-0.40)
    {\textbf{ABA}\quad $\numChunks{=}16$};
\end{scope}

\begin{scope}[xshift=\DX cm, yshift=0cm]
  \node[nd] (a) at (0,0)    {A};
  \node[nd] (b) at (0,1.90) {B};
  \node[nd] (c) at (1.85,1.90) {C};
  \draw[loopout] (a) -- node[left,  xshift=1pt ]{\scriptsize $\times\!5$} (b);
  \draw[loopout] (b) -- node[above, yshift=-1pt]{\scriptsize $\times\!5$} (c);
  \draw[loopret] (c) -- node[right, xshift=-1pt]{\scriptsize $\times\!7$} (a);
  \node[anchor=north, font=\scriptsize\rmfamily] at (0.92,-0.40)
    {\textbf{ABCA}\quad $\numChunks{=}17$};
\end{scope}

\begin{scope}[xshift=2*\DX cm, yshift=0cm]
  \node[nd] (a) at (0,0)    {A};
  \node[nd] (b) at (0,1.90) {B};
  \node[nd] (c) at (1.90,1.90) {C};
  \node[nd] (d) at (1.90,0)    {D};
  \draw[loopout] (a) -- node[left ]{\scriptsize $\times\!4$} (b);
  \draw[loopout] (b) -- node[above]{\scriptsize $\times\!4$} (c);
  \draw[loopout] (c) -- node[right]{\scriptsize $\times\!4$} (d);
  \draw[loopret] (d) -- node[below]{\scriptsize $\times\!4$} (a);
  \node[anchor=north, font=\scriptsize\rmfamily] at (0.95,-0.40)
    {\textbf{ABCDA}\quad $\numChunks{=}16$};
\end{scope}

\begin{scope}[xshift=0cm, yshift=\RowIIY cm]
  \node[nd] (a) at (0.8,0)   {A};
  \node[nd] (b) at (0.8,1.30) {B};
  \draw[loopout, bend left=24] (a) to node[right]{\scriptsize $\times\!4$} (b);
  \draw[loopret, bend left=24] (b) to (a);
  \node[anchor=north, font=\scriptsize\rmfamily] at (0.8,-0.40)
    {\textit{short}\quad $\numChunks{=}8$};
\end{scope}

\begin{scope}[xshift=\DX cm, yshift=\RowIIY cm]
  \node[nd] (a) at (0.8,0)   {A};
  \node[nd] (b) at (0.8,2.0) {B};
  \draw[loopout, bend left=24] (a) to node[right]{\scriptsize $\times\!8$} (b);
  \draw[loopret, bend left=24] (b) to (a);
  \node[anchor=north, font=\scriptsize\rmfamily] at (0.8,-0.40)
    {\textit{medium}\quad $\numChunks{=}16$};
\end{scope}

\begin{scope}[xshift=2*\DX cm, yshift=\RowIIY cm]
  \node[nd] (a) at (0.8,0)   {A};
  \node[nd] (b) at (0.8,2.40) {B};
  \draw[loopout, bend left=24] (a) to node[right]{\scriptsize $\times\!16$} (b);
  \draw[loopret, bend left=24] (b) to (a);
  \node[anchor=north, font=\scriptsize\rmfamily] at (0.8,-0.40)
    {\textit{long}\quad $\numChunks{=}32$};
\end{scope}

\begin{scope}[xshift=0cm, yshift=\RowIIIY cm]
  \node[nd] at (0.92, 0.70) {A};
  \draw[gray!60, semithick] (0.92, 1.28) -- (0.92, 1.45);
  \draw[loopoutthick]
    plot[domain=90:0, samples=20, variable=\t]
      ({0.92 + 0.58*cos(\t)}, {0.70 + 0.58*sin(\t)});
  \draw[loopretthick]
    plot[domain=0:90, samples=20, variable=\t]
      ({0.92 + 0.75*cos(\t)}, {0.70 + 0.75*sin(\t)});
  \node[anchor=north, font=\scriptsize\rmfamily] at (0.92, -0.40) {\textbf{90°} yaw pan};
\end{scope}

\begin{scope}[xshift=\DX cm, yshift=\RowIIIY cm]
  \node[nd] at (0.92, 0.70) {A};
  \draw[gray!60, semithick] (0.92, 1.28) -- (0.92, 1.45);
  \draw[loopoutthick]
    plot[domain=90:-90, samples=30, variable=\t]
      ({0.92 + 0.58*cos(\t)}, {0.70 + 0.58*sin(\t)});
  \draw[loopretthick]
    plot[domain=-90:90, samples=30, variable=\t]
      ({0.92 + 0.75*cos(\t)}, {0.70 + 0.75*sin(\t)});
  \node[anchor=north, font=\scriptsize\rmfamily] at (0.92, -0.40) {\textbf{180°} look-back};
\end{scope}

\begin{scope}[xshift=2*\DX cm, yshift=\RowIIIY cm]
  \node[nd] at (0.92, 0.70) {A};
  \draw[gray!60, semithick] (0.92, 1.28) -- (0.92, 1.40);
  \draw[loopoutthick]
    plot[domain=90:-268, samples=60, variable=\t]
      ({0.92 + 0.65*cos(\t)}, {0.70 + 0.65*sin(\t)});
  \node[anchor=north, font=\scriptsize\rmfamily] at (0.92, -0.40) {\textbf{360°} full pan};
\end{scope}

\begin{scope}[xshift=0cm, yshift=\RowIVY cm]
  \node[nd] (a) at (0.8,0)   {A};
  \node[nd] (b) at (0.8,2.1) {B};
  \draw[loopout, bend left=40] (a) to (b);
  \draw[loopret, bend left=40] (b) to (a);
  \draw[loopoutlight, bend left=14] (a) to (b);
  \draw[loopretlight, bend left=14] (b) to (a);
  \node[anchor=north, font=\scriptsize\rmfamily] at (0.8,-0.40)
    {\textbf{ABABA}\quad $\loopR{=}2$,\ 4~edges};
\end{scope}

\begin{scope}[xshift=\DX cm, yshift=\RowIVY cm]
  \node[nd] (a) at (0,0)    {A};
  \node[nd] (b) at (0,1.90) {B};
  \node[nd] (c) at (1.85,1.90) {C};
  \draw[loopout] (a) -- (b);
  \draw[loopout] (b) -- (c);
  \draw[loopret] (c) -- (b);
  \draw[loopret] (b) -- (a);
  \node[anchor=north, font=\scriptsize\rmfamily] at (0.92,-0.40)
    {\textbf{ABCBA}\quad $\loopR{=}2$,\ 4~edges};
\end{scope}

\begin{scope}[xshift=2*\DX cm, yshift=\RowIVY cm]
  \node[nd] (a) at (0,0)    {A};
  \node[nd] (b) at (0,1.90) {B};
  \node[nd] (c) at (1.90,1.90) {C};
  \node[nd] (d) at (1.90,0)    {D};
  \draw[loopout] (a) -- (b);
  \draw[loopout] (b) -- (c);
  \draw[loopout] (c) -- (d);
  \draw[loopskip] (d) to[bend right=28]
    node[right, xshift=-1pt, font=\scriptsize\rmfamily]{\textit{skip}} (b);
  \draw[loopret] (b) -- (a);
  \node[anchor=north, font=\scriptsize\rmfamily] at (0.95,-0.40)
    {\textbf{ABCDBA}\quad $\loopR{=}2$,\ 5~edges};
\end{scope}

\end{tikzpicture}
\caption{\textbf{\loopbench benchmark gallery.} Solid and dashed blue arrows mark the outbound and return legs, lighter arcs mark a second traversal, and dotted arrows mark shortcuts that skip intermediate waypoints. \textbf{Row~1}: topology (ABA, ABCA, ABCDA). \textbf{Row~2}: rollout length on the ABA topology, where longer rollouts push cached offsets further out of distribution. \textbf{Row~3}: camera orientation, where the agent stays at A while the camera pans away (solid) and returns (dashed). \textbf{Row~4}: multi-revisit patterns, where $\loopR$ counts visits to the repeated waypoint (A in ABABA, B in the ABCBA palindrome and the ABCDBA shortcut).
}
\label{fig:loopbench_gallery}
\end{figure}

We propose \textbf{\loopbench}, a benchmark for episodic \emph{memory} in autoregressive world models. Each task is a navigation path that departs from a location, visits other locations, and returns. The frames generated on the first visit serve as the reference, and we measure how closely the frames generated on the return reproduce them. \loopbench therefore needs no external reference video, as the rollout itself produces both the target and the prediction.
Four properties of the path control task difficulty.
(1)~\textbf{Waypoint count} ($\loopK$): the number of intermediate locations between departure and return to A (ABA: $\loopK{=}1$, ABCDA: $\loopK{=}3$).
(2)~\textbf{Rollout length} ($\numChunks$): the total number of AR chunks in the path. Longer rollouts put more chunks between the first visit and the return, so the revisited scene sits deeper in the compressed cache.
(3)~\textbf{Camera orientation}: the agent stays at A while the camera pans away by $90^\circ$, $180^\circ$, or $360^\circ$ and then returns, testing whether the initial view is recalled after the pan.
(4)~\textbf{Multi-revisit depth} ($\loopR$): the number of times the repeated waypoint is visited in one path (ABABA returns to A twice, ABCBA and ABCDBA pass through B twice).
Each axis varies one property while the others stay unchanged, following the principle that benchmark tasks should be picked to produce distinguishable rankings between candidate systems~\citep{lorraine2022taskselection}. We show all twelve configurations in \Fig{fig:loopbench_gallery} and evaluate every one of them in \Tab{tab:loopbench_full}.
We show qualitative reconstructions on the ABA (axis~1) and pan~$90^\circ$ (axis~3) settings in \Fig{fig:app_aba_qualitative} and \Fig{fig:app_pan90_qualitative}.

\begin{figure}[t]
  \centering
  \includegraphics[width=\linewidth]{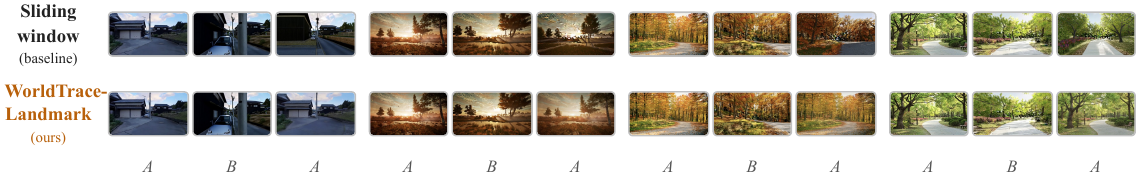}
  \caption{\textbf{ABA qualitative results.} Each sample group of three frames shows trajectory keyframes at chunks 0~(A) / 7~(B) / 15~(A return). The two rows compare the sliding-window baseline (top) vs.\ \wtlandmark (bottom). The sliding window drifts away from the scene-A appearance on the return leg across the four samples. \wtlandmark brings the return frames back to the original scene appearance.
  }
  \label{fig:app_aba_qualitative}
\end{figure}

\begin{figure}[t]
  \centering
  \includegraphics[width=\linewidth]{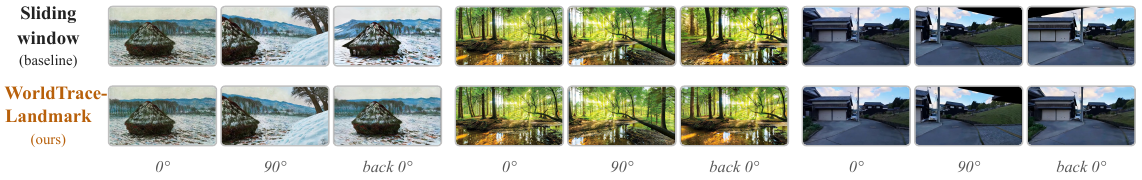}
  \caption{\textbf{Pan~$90^\circ$ qualitative results.} In camera-orientation Tier~3, the agent stays at A while the camera pans right by ${\sim}90^\circ$ and then pans back. Each sample group of three frames shows the start ($0^\circ$), the pan peak (${\sim}90^\circ$), and the return to $0^\circ$. The two rows compare the sliding-window baseline (top) and \wtlandmark (bottom). \wtlandmark restores the initial-view appearance on the return. The sliding window does not.
  }
  \label{fig:app_pan90_qualitative}
\end{figure}

\section{Implementation Details}
\label{app:impl}

\subsection{Model Architecture and Self-Forcing Background}
\label{app:background}

We evaluate on MG2-1.3B~\citep{matrixgame2025}, a 1.3B-parameter autoregressive video world model based on Wan 1.3B T2V~\citep{wan2025}, with 30 transformer layers, 12 attention heads, and a head dimension of 128. Video is generated in AR chunks of 3 latent frames at latent resolution $44 \times 80$ ($352\times640$ pixels), giving 880 tokens per latent frame. Each new chunk is denoised by a distilled flow-matching sampler conditioned on the KV cache of prior chunks. The model is trained with Self-Forcing~\citep{selfforcing2025}, which rolls out on the student's own KV cache rather than on teacher-forced context (related training schemes include Diffusion Forcing~\citep{chen2024diffusionforcing}, CausVid~\citep{yin2025causvid}, and one-step diffusion distillation~\citep{song2024multistudent}). Attention is restricted to a local window of \texttt{local\_attn\_size}$=6$ latent frames, so cross-frame temporal offsets stay $\leq \trainOffset{=}5$. At inference the rolling KV cache would grow without bound. Our method keeps it at the constant $\localAttnSize$-latent-frame budget of the sliding window, which comes to $6 \times 880 \times 12 \times (128{\times}2) \times 30 \times 2 \approx 0.97$\,GB in fp16 per generated video at batch size~1 ($880$ tokens/frame, $12$ heads, head dim $128$, $\times 2$ for K and V, $30$ layers, $2$\,B per value), independent of generation length. The long-horizon failure is therefore not one of missing memories, since keys are stored verbatim and the model was simply never trained to read them through the temporal RoPE rotations they accumulate beyond $\trainOffset$ (\App{app:rope_background}).

\begin{wrapfigure}{r}{0.36\linewidth}
  \centering
  \vspace{-5pt}
  \includegraphics[width=\linewidth]{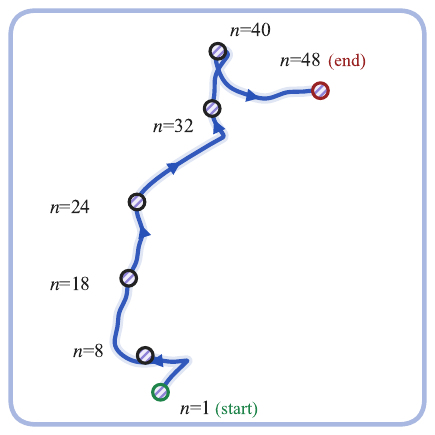}
  \caption{\textbf{Camera trajectory used for \Fig{fig:qualitative}.} Bird's-eye view of the camera trajectory from start ($\idxChunk{=}1$) to end ($\idxChunk{=}48$), with ticks at the chunks shown as columns in \Fig{fig:qualitative}.}
  \label{fig:field_traj}
\end{wrapfigure}

\subsection{\worldtrace Algorithm and Baselines}
\label{app:worldtrace_impl}

\paragraph{\worldtrace cache update.}
After each new chunk is generated, its keys and values enter the verbatim recent window. When the recent window exceeds $\numRecentSlots$ chunks, the oldest chunk is evicted and folded into the summary cache, by canonical key averaging over its temporal group (\wtfield, \Eq{eq:rdc}) or by storing it as a canonical landmark when a scene entry is detected (\wtlandmark, \Eq{eq:landmark}).
We list these update steps in \Alg{alg:update}, together with the shared final step that re-rotates each canonical summary key to its virtual position when attention scores are computed.

\begin{algorithm}[t]
\caption{\worldtrace cache update. $\summaryCache$ stores canonical (unrotated) keys. The rotation $\Rot{\ropefreq \tvslot}$ is applied per slot when attention scores are computed, using the virtual positions of \Def{def:vpos}, so positions recompute automatically as $\qpos$ advances.
}
\label{alg:update}
\begin{algorithmic}[1]
\Require recent cache $\recentCache$, summary cache $\summaryCache$, new chunk's KVs, mode (\wtfield or \wtlandmark)
\Ensure updated $(\recentCache, \summaryCache)$
\State Append new KVs to $\recentCache$ \Comment{recent window is verbatim}
\If{$|\recentCache| > \numRecentSlots$}
  \State Evict the oldest entry from $\recentCache$
  \If{mode is \wtfield}
    \State Find the summary slot whose source bucket now contains the evicted entry \Comment{uniform temporal grouping}
    \State Update that slot to the canonical mean of its source frames \Comment{\Def{def:rdc}}
  \ElsIf{mode is \wtlandmark}
    \If{any frame of the evicted entry has canonical-key cosine distance $> \sbThreshold$ to its predecessor}  \Comment{scene-entry detection, \Sec{subsec:wtlandmark}}
      \State Shift $\summaryCache$ left by one slot \Comment{evict oldest landmark}
      \State Store the evicted entry's canonical keys in $\summaryCache[\numSummarySlots{-}1]$ \Comment{canonical landmark, \Eq{eq:landmark}}
    \EndIf
    \State If fewer than $\numSummarySlots$ landmarks are stored, fill empty slots with the oldest.
  \EndIf
\EndIf
\State \textbf{At attention:} for $\slotIdx{=}0,\ldots,\numSummarySlots{-}1$, apply $\Rot{\ropefreq \tvslot}$ to $\summaryCache[\slotIdx]$ with $\tvslot$ from \Def{def:vpos}.
\end{algorithmic}
\end{algorithm}

\paragraph{Contrast with MG3.}
Matrix-Game-3~\citep{matrixgame3_2026} addresses long-horizon memory from the training side. It retrains a bidirectional diffusion backbone whose memory is built in during training, so that past frames are retrieved by camera pose and field-of-view overlap, injected as additional conditioning latents alongside the recent history, and learned through error-aware training and a modified temporal RoPE, followed by multi-segment distillation for real-time inference. We do not compare against MG3 because its memory is trained into the generator rather than implemented in the cache. There is no fixed pretrained checkpoint whose cache read/write \worldtrace could modify, so the comparison would conflate a retrained memory mechanism with our training-free, inference-time setting.

\subsection{Hyperparameters, Compute, and Evaluation Protocol}
\label{app:hyperparams}

\paragraph{Camera trajectory for the coherence qualitative panel.}
The qualitative comparison in \Fig{fig:qualitative} (\Sec{sec:coherence}) is generated along a single camera path on Matrix-Game~2, played out over $\numChunks{=}48$ AR chunks (${\sim}57$\,s of decoded video).
We show that path from a bird's-eye view in \Fig{fig:field_traj}, where the camera follows one continuous trajectory from the initial scene (start, $\idxChunk{=}1$) to a distinct end pose ($\idxChunk{=}48$), and the marked chunks $\idxChunk \in \{8, 18, 24, 32, 40\}$ give the waypoints shown as columns in \Fig{fig:qualitative}. All methods in \Fig{fig:qualitative} follow this same trajectory.

\paragraph{Inference hyperparameters.}
MG2-1.3B~\citep{matrixgame2025} uses 3 distilled denoising steps ($t \in \{1000, 666, 333\}$ with timestep shift $5.0$), no classifier-free guidance at inference, a single conditioning image, and $\framesPerBlock{=}3$ latent frames per AR chunk. Each chunk is additionally conditioned on per-frame keyboard and mouse actions through the model's action module, which is how the camera trajectories are controlled. In our setting, the first chunk is generated with an empty KV cache, and every later chunk attends to the accumulated compressed cache.

\paragraph{\wtfield hyperparameters.}
Short-horizon experiments ($\numChunks{=}8$) use $\numSummarySlots{=}2$ summary slots and $\numRecentSlots{=}4$ recent-window slots ($\numSummarySlots + \numRecentSlots = \localAttnSize{=}6$). Longer horizon, \loopbench, and ablation experiments (\SecRange{sec:loop}{sec:ablation}) use $\numSummarySlots{=}4$, $\numRecentSlots{=}2$ (same total capacity) for both \wtfield and \wtlandmark. Compression uses uniform temporal grouping, splitting the $\numPastFrames{-}\numRecentSlots$ frames outside the recent window into $\numSummarySlots$ equal groups. Each group's keys are unrotated to canonical space, averaged, and re-rotated at the virtual position. 

\paragraph{Evaluation protocol.}
Each method is evaluated on $100$ videos generated from distinct initial frames. Multi-seed experiments use seeds $\{0, 42, 123, 456, 789\}$. All experiments run on a single NVIDIA A100 80\,GB GPU.

\begin{wraptable}{r}{0.42\linewidth}
\vspace{-1pt}
\centering
\caption{\textbf{Runtime per chunk.} Wall-clock time at $\numChunks{=}8$ with VAE decode, on one A100 80\,GB (batch~1, 3 distilled denoising steps, $352{\times}640$).}
\label{tab:runtime}
\small
\begin{tabular}{lr}
\toprule
Method & s/chunk \\
\midrule
Sliding window & 0.95 \\
\wtfield    & 0.99 \\
\wtlandmark & 1.00 \\
\bottomrule
\end{tabular}
\end{wraptable}
\paragraph{Compute.}
\wtfield generates at ${\sim}0.9$\,s per AR chunk at batch size 1, and VAE decode adds ${\sim}2.5$\,s per chunk. The full experiment set required approximately 100 GPU-hours on single A100 80\,GB GPUs. We report wall-clock time per AR chunk on one A100 80\,GB at batch size~1 in \Tab{tab:runtime}. All methods stay within 6\% of the sliding-window baseline, since the forward pass dominates the cache update, so the overhead of \worldtrace is negligible in practice.

\section{Discussion}
\label{app:discussion}

\subsection{Limitations}
\label{app:limitations}

\worldtrace operates entirely at inference time, leaving the pretrained weights frozen and changing only what information the fixed-size KV cache retains (compressed summaries in place of evicted frames) and the virtual RoPE positions at which those entries are read, with the goal of improving long-horizon memory.
Everything else, including per-chunk visual quality, motion dynamics, and action following, is inherited unchanged from the pretrained backbone. The method also assumes an autoregressive generator with temporal RoPE applied to keys and a known local attention window.
Compression into a fixed budget of $\numSummarySlots$ slots is lossy by construction, since each \wtfield slot averages roughly $\numPastFrames/\numSummarySlots$ frames, a ratio that grows with horizon, so frame-level detail inside a group is progressively blurred. Under \wtlandmark, a scene can be recalled later only if the scene-entry detector stored it as a landmark.
Moreover, the cache holds only $\numSummarySlots$ landmarks, so a rollout that enters more scenes than that loses the oldest ones and can reliably revisit at most $\numSummarySlots$ distinct places. Future work could relax this cap, for example by merging evicted landmarks into a coarse residual summary slot so that discarded scenes leave a recoverable trace, or by selecting which landmark to evict based on predicted revisit likelihood.

\subsection{Future Directions}
\label{app:future_work}

\paragraph{Geometry-aware canonical keys.}
Coupling the unrotate/rerotate primitive with camera-pose warping (\eg the Pl\"{u}cker and Warped-RoPE writers of MosaicMem~\citep{yu2026mosaicmem} and UCM~\citep{xu2026ucm}) would align frames to a shared scene coordinate system and then average, so each summary token blends observations of similar scene content.

\paragraph{Learned scene-entry policies.}
A small policy trained on action discontinuities, agent-pose changes, or scene-segmentation logits could let \wtlandmark commit landmarks during continuous motion and decide \emph{which} landmark to select. The fixed split between $\numSummarySlots$ and $\numRecentSlots$ could become an autotunable schedule and adapt per backbone or horizon without the sweep of \App{app:slot_sensitivity}.

\paragraph{Fine-tuning extensions.}
\worldtrace stays within the attention window $\localAttnSize$, where the budget $\numSummarySlots + \numRecentSlots = \localAttnSize$ keeps every summary slot in-distribution. Two light fine-tuning paths relax this budget. (i)~\emph{Context-extension fine-tuning} on synthetic long rollouts would let \worldtrace allocate more recent slots for coherence or more summary slots for longer recall at the same in-distribution cost. (ii)~\emph{Position-aware fine-tuning} that trains the model on the slot-rank offsets of \Eq{eq:vpos} would tighten the canonical-mean approximation of \wtfield.

\paragraph{Training memory policies from revisit supervision.}
The memory policy that decides what to store, compress, and evict could itself be trained, and revisit tasks supply the training signal for free, since whether the model reproduces the original scene on its return is directly measurable. Closed-loop simulators can collect such revisit episodes at scale, supporting supervised fine-tuning on rollouts where recall succeeds as well as reinforcement learning with the revisit score as reward. In both cases the backbone stays frozen and \worldtrace keeps every stored summary in-distribution, so only the lightweight policy over the cache is trained.

\paragraph{Scaling \loopbench.}
We propose \loopbench (\App{app:loopbench}) with four difficulty tiers, and it can grow beyond them, since longer paths, more waypoints, and richer camera trajectories are all generated by the same recipe, and evaluating other world models on it would show how well their memory mechanisms support revisits, complementing WorldScore~\citep{duan2025worldscore}, MIND~\citep{ye2026mind}, and VBench-2.0~\citep{zheng2025vbench2}.

\subsection{Broader Impact}
\label{app:broader_impact}

\worldtrace is an inference-time cache mechanism that alters no training data, modifies no weights, and expands none of the generative capabilities of the underlying video model, so the relevant considerations are those already attached to pretrained video world models. By keeping long-horizon generation within a constant memory budget, it also lowers the compute cost of building on pretrained generators, a cost that shapes the wider ecosystem reusing them, from score-distilled 3D synthesis~\citep{lorraine2023att3d,xie2024latte3d} with compute-aware gradient estimators~\citep{bettencourt2026carv,richterpowell2025audiosds} to LLM-conditioned mesh generation~\citep{wang2024llamamesh} and motion attribution for video models~\citep{wu2026motive}. We release no new datasets, and downstream use should follow the base models' content policies.

\end{document}